\title[Low-Rank Approximation of Structural Redundancy for Self-Supervised Learning]{Low-Rank Approximation of Structural Redundancy for Self-Supervised Learning}
\newtheorem{assumption}{Assumption}
\def\Cov{\mathop{\rm Cov}\nolimits}%
\newcommand{\Iv}{{\bf I}}
\newcommand\independent{\protect\mathpalette{\protect\independenT}{\perp}}
\def\independenT#1#2{\mathrel{\rlap{$#1#2$}\mkern2mu{#1#2}}}
\newcommand{\bs}{\boldsymbol}
\def\l{\lambda}
\DeclareMathOperator\E{\sf E}
\newcommand{\U}{\mathrm{Unif}}
\def\textiid{i.i.d.\@\xspace}
\newcommand\iid{\ifmmode\text{ i.i.d. } \else \textiid \fi}
\newcommand{\Real}{\mathbb{R}}
\begin{document}
\maketitle

\begin{abstract}%
We study the data-generating mechanism for reconstructive SSL to shed light on its effectiveness. With an infinite amount of labeled samples, we provide a sufficient and necessary condition for perfect linear approximation. The condition reveals a full-rank component that preserves the label classes of $Y$, along with a redundant component. Motivated by the condition, we propose to approximate the redundant component by a low-rank factorization and measure the approximation quality by introducing a new quantity $\varepsilon_{s}$, parameterized by the rank of factorization $s$. We incorporate $\varepsilon_{s}$ into the excess risk analysis under both linear regression and ridge regression settings, where the latter regularization approach is to handle scenarios when the dimension of the learned features is much larger than the number of labeled samples $n$ for downstream tasks. We design three stylized experiments to compare SSL with supervised learning under different settings to support our theoretical findings. 
\end{abstract}

\begin{keywords}%
 Self-supervised learning, redundancy, low-rank approximation, ridge regression.%
\end{keywords}

\section{Introduction}

Reconstructive self-supervised learning (SSL) has been highly successful in various fields~\citep{pathak2016context,vincent2010stacked,radford2018improving,devlin2018bert}, where the theme is to extract representations from unlabeled data that are potentially useful for downstream tasks. One of the major advantages of SSL is its significantly reduced dependency on labeled data. Despite abundant empirical evidence, the theoretical understanding of the performance of SSL under limited labeled data is still insufficient.

In reconstructive SSL, a \emph{pretext task} is designed to predict a target $X_{2}$ with input features $X_{1}$, which yields the learned representation $\psi(X_{1})$. Then, the downstream task is to predict the target $Y$ using  $\psi(X_{1})$. Whether the learned representation is useful for the downstream task relies on the connections between the pretext and downstream tasks. To bridge the pretext and downstream tasks, the conditional independence (CI) assumption, namely $X_{1} \independent X_{2} \,|\, Y$, has been studied in the seminal work~\citep{lee2021predicting}. For the classification setting, they show that CI is a \emph{sufficient condition} for a linear predictor to be optimal for the downstream task, that is, \emph{$\psi(X_1)$ can linearly predict $Y$ perfectly with an infinite number of samples available for the task}. Motivated by this key observation, they provide theoretical guarantees showing the superior sample complexity of SSL under general \emph{approximate} conditional independence settings.
However, a fundamental theoretical question for understanding reconstructive SSL still remains:
\begin{center}
    \emph{What is the sufficient and necessary condition on $(X_{1},X_{2},Y)$, in the classification setting, for a linear predictor to be optimal for the downstream task?\\}
\end{center}

 To address this question, it is helpful to express $X_2$ as $X_{2} = h(X_{1},Y)+ N$, where $(X_{1}, Y)$ is for an arbitrary supervised learning task and $N:= X_{2} - \E[X_{2}|X_{1},Y]$. With this expression, roughly speaking, the target $Y$ can be decoded from $X_{2}$ if $h$ is invertible in some sense. We formalize this notion of invertibility by focusing on classification problems. Our formulation allows for general dependency between $X_{2}$ and $X_{1}$ when conditioning on $Y$. Thus there are features in the learned representation $\psi(X_{1})$ that are redundant for the prediction of $Y$. For instance, for image classification problems, the redundant features may come from the background in the image; if the object of interest is surrounded by other objects in the background, a pretext task of predicting blocked patches of the image may mistakenly extract too many features from the background~\citep{pathak2016context}.  Without any constraints, a large percentage of redundant features can potentially make SSL fail. To this end, we introduce a quantity $\varepsilon_{s}$, indexed by rank $s$, for a low-rank approximation of the redundancy in the learned representation. We show how $\varepsilon_{s}$ affects the performance of SSL through both of our theoretical analysis and experiments. 

Our main contributions are summarized below. 
    \vspace{-0.6em}
\begin{enumerate}
    \item Under the classification setting, we characterize in Section~\ref{sec:iff_match} a sufficient and necessary condition for a linear predictor to be optimal in the downstream task. 
    \vspace{-0.7em}
    \item In Section~\ref{sec:low_rank}, we introduce a low-rank approximation quantity to characterize the redundancy in the learned representation.
    \vspace{-0.7em}
    \item Based on the low-rank approximation quantity, we derive finite sample bounds on the excess risk and the corresponding sample complexity for both ordinary least squares and ridge regression estimators in Section~\ref{sec:finite_sample}. 
    \vspace{-0.7em}
    \item In Section~\ref{exp:syn}, we design a simulation setting to demonstrate the effectiveness of the low-rank approximation. Our sufficient and necessary condition is partially verified through two computer vision tasks in Section~\ref{sec:exp_cv}. 
\end{enumerate}

\subsection{Related work}
Reconstructive SSL is focused on recovering deliberately concealed information in the data. In computer vision, examples include the prediction of blocked patches~\citep{pathak2016context}, recovering the color~\citep{zhang2016colorful}, denoising~\citep{vincent2010stacked}, and identifying the rotated angle~\citep{gidaris2018unsupervised}, while the simple scheme of next word prediction is widely adopted in NLP~\citep{radford2018improving,devlin2018bert}. From the theoretical perspective,~\citep{saunshi2020mathematical} and~\citep{wei2021pretrained} study how pre-trained language models yield useful representation for downstream tasks. For computer vision tasks,~\citep{pathak2016context} provides a theoretical understanding of features learned by auto-encoders under a multi-view data assumption. Under a general formulation of reconstructive SSL,~\citep{lee2021predicting} shows that CI is sufficient for a linear predictor to be optimal in the downstream task and provides finite sample analysis. Since CI often fails to hold in practical settings,~\citep{teng2022can} proposes to modify the unlabeled data to make CI hold. Their theoretical analysis suggests that the modification is hurtful rather than helpful for the performance of SSL. The other popular type of SSL is called contrastive SSL, where the goal is to learn representations that make different views of the same data point closer. The  CI assumption has been adopted in~\citep{arora2019theoretical,tosh2021contrastive} to provide theoretical guarantees for contrastive learning. In the context of contrastive SSL, CI is a natural assumption since, ideally,  two views are expected to share less information given the label. The literature on SSL is vast and we refer the readers to~\citep{gui2023survey,ozbulak2023know} for detailed reviews.


\subsection{Notation}
Throughout the paper, $\lVert\cdot\rVert$ denotes the $l_{2}$ norm for vectors or Frobenius norm for matrices. We use $\bs{0}$ and $\bs{1}$ to denote vectors or matrices of zeros and ones, respectively. For a full (column) rank matrix  $A \in \mathbb{R}^{m \times n}$ with $n < m$, we use $A^{\dagger}$ to denote its (left) pseudoinverse. Let $\Cov(X)$ denote the covariance matrix of a random vector $X$ and $\Cov(X_{1},X_{2})$ denote that of two random vectors $X_{1}$ and $X_{2}$. We use $\widetilde{\mathcal{O}}$ to hide log factors and $\lesssim$ to hide constants in inequalities. We use $\Iv_{d}$ to denote the identity matrix of size $d \times d$.   A random vector $X \in \mathbb{R}^{d}$ is said to be $\sigma^{2}$-sub-Gaussian if $\E[X]=\bs{0}$ and $\E[e^{t u^{\top}X}] \leq e^{\frac{t^2\sigma^{2}}{2}}$ for any $t \in \mathbb{R}$ and $u \in \mathbb{R}^{d}$ such that $\lVert u \rVert =1$.

\section{Problem Formulation: Reconstructive SSL}

Consider $(X_{1},X_{2},Y) \in \mathcal{X}_{1}  \times \mathcal{X}_{2} \times \mathcal{Y}$ where $X_{2}\in \mathbb{R}^{d_{2}}$ and $Y$ are the target variables for the pretext and downstream tasks, respectively, and $X_{1} \in \mathbb{R}^{d_{1}}$ is a vector of features shared by the two prediction tasks. We focus on the classification setting for the downstream task, i.e., $Y$ is categorical. For regression problems, one can consider the continuous target variable being discretized to a set of values. We use $\bar{Y} \in \bar{\mathcal{Y}} = \{y_{1},\ldots,y_{p+1}\}$ to denote the original label variable and $Y = (\mathbbm{1}_{\bar{Y}=y_{1}},\ldots,\mathbbm{1}_{\bar{Y}=y_{p}})^{\top}$ to denote its one-hot encoding with one class excluded to avoid multicolinearity as $\sum_{i=1}^{p+1}\mathbbm{1}_{\bar{Y}=y_{i}}=1$, and we will simply refer to $Y$ as the one-hot encoding of $\bar{Y}$ throughout this work. We assume $p < d_{2}$ throughout the work. For simplicity, we assume that the optimal predictors for different classes of $Y$ are not linearly dependent, i.e., $\Cov(\E[Y|X_{1}])$ has full rank; otherwise, certain classes of $Y$ can be hidden to make it hold.

Concretely, we consider the following reconstructive SSL procedure. 
\begin{enumerate}
    \item \emph{Pretext task}: Given unlabeled data, predict $X_{2}$ using $X_{1}$ under some function class $\Psi$, i.e., estimate $\psi^{*} :=  \arg\min_{\psi \in \Psi}\E[\lVert X_{2}-\psi(X_{1})\rVert^{2}]$.
    \item \emph{Downstream task}: Given $n$ labeled data, regress $Y$ on the learned representation $\psi^{*}(X_{1})$ using \emph{simple} regression functions such as linear or ridge regression.
    
\end{enumerate}

Since there is often a large amount of unlabeled data and one can adopt deep neural networks to achieve universal approximation, we fix $\psi^{*}(x):= \E[X_{2}|X_{1}=x]$ and focus on analyzing the downstream task. Due to the nature of the small (labeled) sample size of SSL, the function class for the downstream task is often assumed to have lower complexity compared to $\Psi$ (e.g., smaller parameter space).  For theoretical analysis, we consider the class of all linear functions for the downstream task similarly as in~\citep{lee2021predicting}. In practice, the advantage of SSL over supervised learning (SL) is more significant when the labeled sample size $n$ is relatively small, in which case the dimension of $\psi^{*}$ can be larger than $n$. To avoid the downstream task being ill-posed, we adopt the ridge estimator. To measure the gap between the SSL prediction and the optimal predictor $\E[Y|X_{1}]$ in infinite and finite samples, respectively, we define the approximation error and excess risk.


\begin{definition}\label{def:match}
Define the approximation error of  SSL as $ \mathsf{error}_{\text{apx}}^{*}:=  \min_{\beta} \mathsf{error}_{\text{apx}}(\beta)$, where  
\begin{equation}
   \mathsf{error}_{\text{apx}}(\beta
  ) :=    \E\left[ \left\lVert \E[Y|X_{1}] - \beta \psi^{*}(X_1)\right\rVert^2 \right] \label{def_sept}
\end{equation} 
with $\psi^{*}(x)= \E[X_{2}|X_{1}=x]$, the optimal predictor of $X_2$ given $X_1$.

\end{definition}
Note that $\psi^{*}(x)= \E[X_{2}|X_{1}=x]$ can be ensured by a function class with universal approximation power such as deep neural networks. 
\begin{definition}\label{def:exact}
We say there is an  exact matching between $Y$ and $X_{2}$ given $X_{1}$ if $\mathsf{error}_{\text{apx}}^{*}=0$.
\end{definition}

For simplicity, we will omit the intercept $b(\beta):=\E[Y]-\beta\E[X_{2}]$ throughout the work. The performance of the downstream task is usually quantified through the so-called \emph{excess risk} defined with respect to the finite sample analysis. Denote $X:=\psi^{*}(X_{1})=\E[X_{2}|X_{1}]$. Let $\bs{X}_{1} \in \mathbb{R}^{n\times d_{1}}$ and $\bs{Y} \in \mathcal{Y}^{n\times p}$ be the labeled data, and $\bs{X}: = \psi^{*}(\bs{X}_{1})\in \Real^{n\times d_2}$ denote the learned representation from pretraining. For the downstream task and $\lambda \geq 0$, let
\begin{align*}
    \hat{\beta}_{\lambda} := \arg\min_\beta \frac{1}{n}\lVert\bs{Y}-\bs{X}\beta^{\top}\rVert^2+\lambda\lVert\beta\rVert^2  = \bs{Y}^{\top} \bs{X}(\bs{X}^{\top}\bs{X} +\lambda n \Iv_{d_{2}})^{-1}.
\end{align*}

\begin{definition}\label{def:match}
 The excess risk induced by the estimator $\hat{\beta}_{\lambda}$ is defined as $\mathcal{R}(\hat{\beta}_{\lambda}): = \mathsf{error}_{\text{apx}}(\hat{\beta}_{\lambda})$. 

\end{definition}

  The term ``matching'' can be viewed in the following sense: (1) the form of nonlinearity in $\E[Y|X_{1}] $ should be captured by $\E[X_{2}|X_{1}]$; (2) the ``redundant'' nonlinearity in $\E[X_{2}|X_{1}]$ should be linearly dependent so that they can be removed through a linear transform. As a toy example, consider $\E[Y|X_{1}] = X_{1}^{2}$  and $\E[X_{2}|X_{1}] = (-X_{1}^{2}+\sin(X_{1}),\,0.5\sin(X_{1}))^{\top}$ and note they share the same quadratic term $X_{1}^{2}$, while the sine functions in $\E[X_{2}|X_{1}]$ are redundant for predicting $Y$. Observe that SSL with $\beta = (-1,\, 2)^{\top}$ extracts the quadratic term while eliminating the sine functions. In contrast, $\E[X_{2}|X_{1}] =(X_{1},\, 0.5\cos(X_{1}))^{\top}$ will not lead to an exact matching.

\begin{remark}
    This notion of predicting a subset of $X$ can be helpful for predicting $Y$ is not limited to reconstructive SSL. For instance, in a series of recent papers~\cite{du2022invariant,du2023generalized,du2023learning}, the authors have explored a similar direction from an invariance perspective for multi-environment domain adaption, which has partially motivated this study.
\end{remark}


\section{Necessary and Sufficient Condition for Exact Matching}\label{sec:iff_match}

In an attempt to demystify the matching between the pretext and downstream tasks, we propose to identify the conditions on the generating mechanism of $(X_{1}, X_{2}, Y)$ that enable an exact matching. The generating mechanism of $(X_{1}, Y)$ in a supervised learning task is often complicated, and thus we make no assumptions on how $(X_{1},Y)$ is generated and focus on the interactions between $(X_{1},Y)$ and $X_{2}$. Without loss of generality, we can write $X_2$ in the following form 
\begin{equation}
    X_{2}= h(X_{1},Y) + N, \label{func}
\end{equation}
where $h(X_{1},Y): = \E[X_{2}|X_{1},Y]$ is the regression function of $X_{2}$ on $(X_{1},Y)$ and therefore the residual variable $N:= X_{2} - h(X_{1},Y)$ satisfies $\E[N|X_{1},Y]=0$. 
 The function $h$ captures how the label $Y$ and feature $X_{1}$ are encoded into $X_{2}$.



Equation~\eqref{func} can be viewed from a causal perspective through a general structural causal model (SCM)~\citep{pearl2009causality}, $ X_{2} = f(X_{1},Y, \varepsilon)$, 
where $\varepsilon$ is a vector of exogenous variables independent of $(X_{1}, Y)$. Since this general SCM suffers from identifiability issues, we focus on~\eqref{func}, observing that $h(X_{1},Y): = \E[X_{2}|X_{1},Y] = \E[f(X_{1},Y, \varepsilon)|X_{1},Y]$.  It is important to note that~\eqref{func} is valid even when there is no underlying causal graph over $(X_{1},X_{2}, Y)$.   

Recall that $Y$ is the one-hot encoding of $\bar{Y}$. Observe that an arbitrary function $h:(\mathcal{X},\mathcal{Y}) \to \mathbb{R}^{d}$ can be equivalently written as
\begin{equation}
    h(X,Y) = \sum_{j=1}^{p}h(X,e_{j}) \mathbbm{1}_{\bar{Y}=y_j} = \sum_{j=1}^{p} h(X,e_{j}) e_{j}^{\top}Y := C^{h}(X) Y, 
\end{equation}
where we use the fact that $ \mathbbm{1}_{\bar{Y}=y_j} =  e_{j}^{\top}Y$. This simple derivation implies a one-to-one correspondence between $h$ and $C^{h}$, meaning that the general function model~\eqref{func} can be expressed as
\begin{equation}
        X_{2} = h(X_{1},Y) + N = C^{h}(X_{1})Y  +N \label{nonlinear_scm},
\end{equation}
with $C^{h}: \mathcal{X}_{1} \to \mathbb{R}^{d_{2} \times p}$. The role of the latent random matrix $C^{h}(X_{1}) $ is to encode the label variable $Y$ into $X_{2}$, thus we call $C^{h}$ the \emph{encoding function}. The expression in~\eqref{nonlinear_scm} implies the identity $\E[X_{2}|X_{1}] = C^{h}(X_{1})\E[Y|X_{1}]$, which is equivalent to
\begin{equation}
    \E[X_{2}|X_{1}] = (C^{h}(X_{1})+O(X_{1}))\E[Y|X_{1}]:= \bar{C}^{h}(X_{1})\E[Y|X_{1}], 
\label{eq:equiv_ch}
\end{equation}
for any $O: \mathcal{X}_{1} \to  \mathbb{R}^{d_{2} \times p}$  such that  $O(X_{1})\E[Y|X_{1}] = \bs{0}$. In words, the rows of $O(x)$ are orthogonal to $\E[Y|X_{1}=x]$ for $\forall x \in \mathcal{X}_{1}$. We call such $O(x)$ an orthogonal term. For instance, the orthogonality holds when $\E[Y|X_{1}] = (X_{1},X_{1}^2)^{\top}$ and each row of $O(X_{1})$ is $(-X_{1},1)$. Equation~\eqref{eq:equiv_ch} defines an equivalent class of encoding functions $\mathcal{C} = \{\bar{C}^{h}\}$ that results in the same pretext representation $\psi^{*}(X_{1}) = \E[X_{2}|X_{1}]$. This shows that such orthogonal terms do not affect the analysis of SSL, and thus we use $\doteq$ to hide the orthogonal term (added to $C^{h}$) in equations throughout the paper. 


\begin{proposition}\label{def:alter}
The exact matching in Definition~\ref{def:exact} holds if and only if $\beta C^{h}(x) \doteq  \Iv_{p}$, for $\forall x \in \mathcal{X}_{1}$ and some $\beta \in \mathbb{R}^{p \times d_{2}}$. 
\end{proposition}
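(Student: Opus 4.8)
## Proof proposal

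The plan is to prove the two directions of the equivalence by working directly with the definition of exact matching and the decomposition $X_2 = C^h(X_1) Y + N$. Recall from Definition~\ref{def:exact} that exact matching means $\mathsf{error}_{\text{apx}}^{*} = 0$, i.e., there exists $\beta \in \mathbb{R}^{p \times d_2}$ with $\E\big[\lVert \E[Y|X_1] - \beta \psi^{*}(X_1) \rVert^2\big] = 0$, and that $\psi^{*}(X_1) = \E[X_2|X_1] = C^h(X_1)\E[Y|X_1]$ by~\eqref{nonlinear_scm}. So the condition $\mathsf{error}_{\text{apx}}^{*}=0$ is equivalent to $\E[Y|X_1] = \beta C^h(X_1) \E[Y|X_1]$ almost surely. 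The target statement is that this holds for some $\beta$ if and only if $\beta C^h(x) \doteq \Iv_p$ for all $x \in \mathcal{X}_1$, where $\doteq$ absorbs an orthogonal term as defined around~\eqref{eq:equiv_ch}.

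For the ``if'' direction, suppose $\beta C^h(x) \doteq \Iv_p$ for all $x$, meaning $\beta C^h(x) = \Iv_p + O'(x)$ where $O'(x)$ is an orthogonal term, i.e., $O'(x)\E[Y|X_1=x] = \bs{0}$ for all $x$ (using the same notion of orthogonal term as in~\eqref{eq:equiv_ch}, now applied to a $p\times p$ matrix acting on $\E[Y|X_1]$). Then $\beta C^h(x)\E[Y|X_1=x] = \E[Y|X_1=x] + O'(x)\E[Y|X_1=x] = \E[Y|X_1=x]$ for every $x$, so the integrand in $\mathsf{error}_{\text{apx}}(\beta)$ vanishes pointwise and hence $\mathsf{error}_{\text{apx}}^{*} = 0$.

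For the ``only if'' direction, suppose $\mathsf{error}_{\text{apx}}^{*}=0$, so there is a $\beta$ with $\beta C^h(x)\E[Y|X_1=x] = \E[Y|X_1=x]$ for $x$ in a set of probability one. Set $M(x) := \beta C^h(x) - \Iv_p$; then $M(x)\E[Y|X_1=x] = \bs 0$ almost surely, so $M$ is exactly an orthogonal term in the sense of~\eqref{eq:equiv_ch}, which is precisely what $\beta C^h(x) \doteq \Iv_p$ means (here one should note that $\doteq$ hides orthogonal terms added to $C^h$, and $\beta$ times such a term is again orthogonal, so the statement is consistent whether we view the orthogonal correction as attached to $C^h$ or to the product). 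The main subtlety to handle carefully is the almost-sure versus everywhere distinction and making precise that the ``$\doteq$'' in the proposition refers to equality up to an orthogonal term acting on $\E[Y|X_1]$; I would also invoke the standing assumption that $\Cov(\E[Y|X_1])$ has full rank to rule out degeneracies, though strictly this is not needed for the bare equivalence and is more relevant for the uniqueness/interpretation of $\beta$. The only genuine obstacle is bookkeeping around the $\doteq$ convention — the algebra itself is immediate once the decomposition $\psi^{*}(X_1) = C^h(X_1)\E[Y|X_1]$ from~\eqref{nonlinear_scm} is in hand.
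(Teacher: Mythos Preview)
Your approach is essentially the paper's: both reduce exact matching to $(\Iv_p-\beta C^h(X_1))\E[Y|X_1]=\bs{0}$ and then read off the residual as an orthogonal term.

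One correction on the bookkeeping you flag: the full-rank assumption on $\Cov(\E[Y|X_1])$ is \emph{not} dispensable for the ``only if'' direction under the paper's $\doteq$ convention. By definition, $\doteq$ hides an orthogonal term added to $C^h$ itself (a $d_2\times p$ matrix), so from your $p\times p$ residual $M(x)=\beta C^h(x)-\Iv_p$ you still need to exhibit a $d_2\times p$ map $O$ with $\beta O(x)=-M(x)$ and $O(x)\E[Y|X_1=x]=\bs{0}$. Your parenthetical only checks the easy direction (that $\beta$ times an orthogonal $O$ is again orthogonal), not this lifting. The paper handles it exactly here: full rank of $\Cov(\E[Y|X_1])$ forces $\beta$ to have full row rank, hence a right inverse $\beta^{-1}$ exists, and one takes $O(x)=-\beta^{-1}M(x)$. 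So keep the assumption in the argument rather than labeling it optional.
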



Therefore, in this formulation, finding an exact matching is equivalent to inverting the encoding function $C^{h}$. Proposition~\ref{def:alter} implies that the full rank of $C^{h}(x)$  for every $x \in \mathcal{X}_{1}$ is a necessary condition for the exact matching. In the following lemma, we provide \emph{a sufficient and necessary condition for the exact matching} through a full characterization of the invertibility of $C^{h}$.

\begin{lemma}[sufficient and necessary condition for exact matching]\label{lem:iff_match}There is an exact matching between $Y$ and $X_{2}$ given $X_{1}$  if and only if
    \begin{equation}
    C^{h}(x) \doteq A \begin{bmatrix}
        \Iv_{p}\\
        R(x)
    \end{bmatrix}  \quad \text{ for }\forall x \in \mathcal{X}_{1},\label{iif_cond}
    \end{equation}    
for some invertible matrix $A \in \mathbb{R}^{d_{2}\times d_{2}}$, an arbitrary matrix function $R: \mathcal{X}_{1} \to \mathbb{R}^{(d_{2}-p)\times p}$.   
\end{lemma}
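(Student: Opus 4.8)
The plan is to prove both directions by leveraging Proposition~\ref{def:alter}, which tells us that exact matching is equivalent to the existence of $\beta \in \mathbb{R}^{p \times d_2}$ with $\beta C^h(x) \doteq \Iv_p$ for all $x$.

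For the ``if'' direction, suppose $C^h(x) \doteq A \begin{bmatrix} \Iv_p \\ R(x) \end{bmatrix}$ with $A$ invertible. Write $A^{-1} = \begin{bmatrix} B_1 \\ B_2 \end{bmatrix}$ where $B_1 \in \mathbb{R}^{p \times d_2}$ consists of the top $p$ rows. Then $B_1 C^h(x) \doteq B_1 A \begin{bmatrix} \Iv_p \\ R(x) \end{bmatrix} = \begin{bmatrix} \Iv_p & \bs{0} \end{bmatrix} \begin{bmatrix} \Iv_p \\ R(x) \end{bmatrix} = \Iv_p$ for every $x$, so taking $\beta = B_1$ and invoking Proposition~\ref{def:alter} gives exact matching. (Here I am using that $\doteq$ hides an orthogonal term, so the identity $C^h(x) \doteq A[\Iv_p; R(x)]$ means $C^h(x) = A[\Iv_p;R(x)] + O(x)$ with $O(x)\E[Y|X_1=x]=\bs{0}$, and left-multiplication by $B_1$ preserves this structure since $B_1 O(x)$ still annihilates $\E[Y|X_1=x]$.)

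For the ``only if'' direction, suppose exact matching holds, so by Proposition~\ref{def:alter} there is $\beta \in \mathbb{R}^{p \times d_2}$ with $\beta C^h(x) \doteq \Iv_p$ for all $x$; in particular $\beta$ has full row rank $p$ and each $C^h(x)$ has full column rank $p$. The idea is to build the invertible matrix $A$ by completing $\beta$. Choose any $\Gamma \in \mathbb{R}^{(d_2-p)\times d_2}$ so that $M := \begin{bmatrix} \beta \\ \Gamma \end{bmatrix} \in \mathbb{R}^{d_2 \times d_2}$ is invertible (possible since $\beta$ has rank $p$), and set $A := M^{-1}$. Then for each $x$, compute $M C^h(x) = \begin{bmatrix} \beta C^h(x) \\ \Gamma C^h(x) \end{bmatrix} \doteq \begin{bmatrix} \Iv_p \\ R(x) \end{bmatrix}$ where we define $R(x) := \Gamma C^h(x) \in \mathbb{R}^{(d_2-p)\times p}$. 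Multiplying on the left by $A = M^{-1}$ yields $C^h(x) \doteq A \begin{bmatrix} \Iv_p \\ R(x) \end{bmatrix}$, which is exactly~\eqref{iif_cond}.

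The main subtlety to handle carefully is the bookkeeping around the $\doteq$ relation: I need to check that the orthogonal-term equivalence is respected under the left-multiplications performed in both directions, i.e., that if $O(x)$ satisfies $O(x)\E[Y|X_1=x] = \bs{0}$ then so does $B_1 O(x)$ and so does $M O(x)$ (the latter componentwise in the block decomposition). This is immediate since each row of $B_1 O(x)$ (resp.\ $MO(x)$) is a linear combination of rows of $O(x)$, hence still orthogonal to $\E[Y|X_1=x]$. A secondary point worth stating explicitly is that in the ``only if'' direction, the freedom in choosing $\Gamma$ (and hence $R$) reflects the fact that $R$ is allowed to be an arbitrary matrix function in the statement; no canonical choice is needed. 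Beyond this, both directions are essentially a change-of-basis argument, so I expect no serious obstacle.
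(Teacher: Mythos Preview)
Your proposal is correct and follows essentially the same change-of-basis argument as the paper. The only cosmetic difference is in the ``only if'' direction: the paper extends $\beta$ to an invertible $d_2\times d_2$ matrix via the QR decomposition of $\beta^\top$ (obtaining $\beta=[\bar C,\bs{0}]\bar A$ with $\bar A$ orthonormal) and then absorbs $\bar C^{-1}$ into the final $A$, whereas you extend $\beta$ by an arbitrary full-rank completion $\Gamma$ --- your route is slightly more direct and makes the freedom in $R$ more transparent, but the underlying idea is identical.
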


The identity mapping $\Iv_{p}$ fully preserves each class of $Y$, and $R(x)$ represents the redundancy encoded into $X_{2}$. It is worth noting that redundancy refers to the features extracted from $X_{1}$ that are predictive for $X_{2}$, but redundant for the prediction of $Y$ (given the optimal predictor $\E[Y|X_{1}]$).  In our stylized MNIST experiment in Section~\ref{sec:sty_mnist}, the dash pattern in the background is redundancy since it is useful for predicting the image orientation (i.e., $X_{2}$), but it contains no information about the label. In contrast, the dot pattern is not redundant, since it is independent of both the image orientation and the label. The lemma above reveals that the label $Y$ should be encoded into $X_{2}$ through an invertible linear mixture (i.e., $A$) of the full label information and some redundancy. When $A$ is an identity matrix, the first $p$ rows of $\psi^*(X_{1}) = \E[X_{2}|X_{1}]$ capture the full label information, thus the downstream task has a sparse solution $\beta^{*} = [\Iv_{p}, \, \bs{0}]$. However, the solutions to the downstream task may not be sparse in general, and we handle this challenge in Section~\ref{sec:low_rank}. Below are two examples with explicit forms of $C^h$. 

\begin{example}\label{exm:const}An important special case of model~\eqref{nonlinear_scm} is $X_{2} = \widetilde{C}Y+N$, where $C^{h} \equiv \widetilde{C}$ is a constant function.
In this case, the necessary and sufficient condition simplifies to the condition that $\widetilde{C}$ has full rank. Observe that $\E[X_{2}|X_{1}] =\widetilde{C} \E[Y|X_{1}]$ implies  $  \mathsf{error}_{\text{apx}}(\widetilde{C}^{\dagger}) = 0$. 

\end{example}

In Appendix~\ref{app:conditional}, we show that model $X_{2} = \widetilde{C}Y+N$ is equivalent to $\E[X_{2}|X_{1},Y]= \E[X_{2}|Y]$, which we call \emph{conditional mean independence}, which is weaker than CI, i.e., $X_{1} \independent X_{2} \,|\, Y$. The setting in Example~\ref{exm:const} has been studied in~\citep{lee2021predicting} under CI. Despite the simplicity of conditional mean independence, it can be unrealistic in practical settings, since it requires that the label $Y$ is fully encoded into $X_{2}$ with no redundant information (depending on $X_{1}$) as if the pretext and downstream tasks are two equivalent prediction tasks. Even though approximate conditional independence has been studied in~\citep{lee2021predicting}, it is unclear if the approximation provides sufficient insights into explaining why and when SSL works (or fails), since conditional independence (or constant $C^{h}$) is not a necessary condition for exact matching.

\begin{example}[partially linear model] 
Define an invertible matrix $A\in \mathbb{R}^{d_{2} \times d_{2}}$ with a column partition as $A = [A_1 | A_2 | A_3]$, where $A_{1}$ has $p$ columns, $A_{2}$ has $k$ columns such that $1 \leq k \leq d_{2}-p$, and $A_{3}$ has the rest of the columns. Let $X_{2} = A_{1}Y + A_{2} a(X_{1}) + N$, where  $a: \mathcal{X}_{1} \to \mathbb{R}^{k}$ satisfies $\E[a(X_{1})]=\bs{0}$. Its encoding function is  $C^{h}(X_{1}) = A_{1}+A_{2} a(X_{1}) \bs{1}^{\top}$ as derived below. The sufficient and necessary condition is immediately satisfied with $A$ and $R(x) = [ \bs{1}a^{\top}(x) , \bs{0}]^{\top}$,
 where $R(x)$ has $d_{2}-p -k$ all zero rows.
\vspace{-.6em}
\begin{equation}
    h(X_{1},Y)  = \sum_{j=1}^{p}(A_{2}a(X_{1})+A_{1}e_{j})e_{j}^{\top} Y = \big(A_{2}a(X_{1})\sum_{j}e_{j}^{\top} + A_{1}\sum_{j}e_{j}e_{j}^{\top} \big) Y:=C^{h}(X_{1})Y .  \nonumber
    \vspace{-.6em}
\end{equation}Even though conditional independence fails to hold since $C^{h}$ is not constant, according to Lemma~\ref{lem:iff_match}, there is still an exact matching.
\end{example}

\section{Structural Redundancy}\label{sec:low_rank}
The pretext representation $\psi^{*}(X_{1})$ is typically high-dimensional, designed to capture abundant information for various downstream tasks. Given limited labeled samples ($n \ll d_{2}$ in our notation), the downstream task is a high-dimensional linear regression problem. Without any assumptions such as sparsity of the true coefficients~\citep{tibshirani1996regression, candes2007dantzig} or low effective dimension of the features~\citep{zhang2005learning,hsu2012random}, SSL may not perform well even if the exact matching holds. Since the true coefficients are not necessarily sparse as discussed in Section~\ref{sec:iff_match}, we explore how low-rank structures in the redundancy (i.e., $R(X_{1})$) naturally lead to a low effective dimension. In particular, we adopt the definition of effective dimension from~\citep{hsu2012random} in the context of ridge regression (see details below); a closely related notion is called effective degrees of freedom~\citep{efron1986biased,hastie2009elements}. Roughly speaking, the effective dimension measures the number of features that are not linearly correlated; when it is low, a small number of labeled samples can be sufficient for reliable estimation



To simplify the notation, denote $X:=\psi^{*}(X_{1})$ with  covariance matrix $\Sigma := \Cov(X)$. Let $\{\lambda_{j}\}_{j=1}^{d_{2}}$ denote the eigenvalues of $\Sigma$. 
The population ridge estimator is given by
\begin{align*}
\beta_{\lambda} = \arg\min_{\beta} \E\left[\lVert Y-X\beta \rVert^2\right]+\lambda \lVert\beta \rVert^2 = (\Sigma + \lambda \Iv_{d_{2}})^{-1}\E[XY^{\top}].
\end{align*}
Implicitly dimension reduction is performed in ridge regression when some appropriate shrinkage parameter $\lambda$ is chosen. The reduced dimension for a chosen $\lambda$ can be quantified by the \emph{effective dimension}, defined as $d_{\lambda} = \sum_{j=1}^{d}\frac{\lambda_{j}}{\lambda_{j}+\lambda}$ for $X\in \mathbb{R}^{d}$. Note that the bias of the ridge estimator increases monotonically as $\lambda$ increases. When $\Sigma$ has exactly $s$ nonzero eigenvalues, $d_{\lambda}$ is upper bounded by $s$ for any $\lambda\geq 0$. Besides this special case, a low effective dimension can be achieved under a weak penalty (i.e., small $\lambda$) when there is a large percentage of small eigenvalues. 


In the next subsection, we demonstrate that a low effective dimension is naturally attained when the redundancy $R(X_{1})$ can be approximated by a low-rank decomposition. Our finite sample analysis on the high-dimensional setting, presented in Section~\ref{sec:finite_sample}, relies on an upper bound of the low effective dimension, utilizing a measure of the low-rank approximation introduced in the following subsection (Lemma~\ref{assump:bound_d1}). When $d_{2}<n$, $d_{\lambda}$ with $\lambda=0$ offers an interpretation of our upper bound on the excess risk and sample complexity (see details in Theorem~\ref{thm:ols} and Remark~\ref{rmk:es_zero}).

\subsection{Low-rank Approximation of Redundancy}
Recall, redundancy refers to information in $X_{1}$ useful for predicting $X_{2}$ but not for the label $Y$. For instance, in computer vision tasks, consider a label $Y$ determined by the object of interest within a surrounding background. Redundancy arises when the pretext task captures background information unrelated to the label. If the background features \emph{simple} patterns, such as sky, grassland, or beach, this redundancy can be considered low-rank. Consequently, it is relatively easy to eliminate such redundancy in downstream tasks (recall the cancellation of sine functions below Definition~\ref{def:match}). In the following, we present the technical details of the low-rank approximation of redundancy. 

Denote $\widetilde{C} := \E[C^{h}(X_{1})]$ and recall that $C^{h}(X_{1})$ reduces to $\widetilde{C} $ under conditional mean independence.  Assume that the necessary and sufficient condition in Lemma~\ref{lem:iff_match} is satisfied,  
\begin{align}
   X =  C^{h}(X_{1})\E[Y|X_{1}] 
   &=\left( \widetilde{C} + A\begin{bmatrix}
        \bs{0} \,\, 
        \left(R(X_{1})-\widetilde{R}
    \right)^{\top}\end{bmatrix}^{\top}\right)\E[Y|X_{1}] \nonumber\\ 
&= (\widetilde{C}  +  A_{p+1:d_{2}} (R(X_{1})-\widetilde{R})))\E[Y|X_{1}], \nonumber
\end{align}
where $\widetilde{R}: = \E[R(X_{1})]$ and $ A_{p+1:d_{2}} \in \mathbb{R}^{d_{2} \times (d_{2}-p)}$ denotes the last $d_{2}-p$ columns of $A$. If the (centered) redundancy $R(X_{1})-\widetilde{R}$ admits a low-rank decomposition, i.e.,  $R(X_{1})-\widetilde{R} = Bg(X_{1})$ for some $B \in \mathbb{R}^{(d_{2}-p) \times s}$ and $g: \mathcal{X}_{1} \to \mathbb{R}^{s \times p}$, where $s \ll d_{2}$, we get 
\begin{equation}
     X = (\widetilde{C} + A_{p+1:d_{2}}Bg(X_{1}))\E[Y|X_{1}], \label{eq:exact_low_rank}
\end{equation}
which has at most $p+s$ linearly independent components, as $\text{rank}(\widetilde{C}) \leq p$ and $\text{rank}(A_{p+1:d_{2}}B)\leq s$. This shows that the effective dimension of $X$ is bounded by $p+s$ for any $\lambda\geq 0 $.

Since the low-rank decomposition may not hold exactly for a chosen $s$, we identify $\hat{X}$ of the form $ (\widetilde{C} + Bg(X))\E[Y|X_{1}]$ that best approximates $X$. Specifically, for any fixed $s$ s.t. $1 \leq s \leq d_{2}-p$, we consider $B\in \mathbb{R}^{d_{2} \times s},\,g: \mathcal{X}_{1} \to \mathbb{R}^{s \times p}$, and define 
\begin{align}
  \varepsilon_{s} &: = \min_{\hat{X}} \frac{1}{d_{2}} \E\left[\left\lVert X- \hat{X} \right\rVert^2 \right]  = \min_{B,g}\frac{1}{d_{2}} \E\left[\left\lVert \left(C^{h}(X_{1}) - \widetilde{C} - Bg(X_{1})\right) \E[Y|X_{1}] \right\rVert^2\right], \label{approxi_decom}
\end{align}
with minimizers $\{(B^{*}, g^{*})\}$ and we use the shorthand $ \widetilde{X}:=(\widetilde{C} + B^*g^*(X_{1}))\E[Y|X_{1}]$ for any pair of optimizer $(B^{*}, g^{*})$.  Without loss of generality, we normalize $g(X_{1})$ and assume $\E[\lVert g(X_{1}) \rVert]=1$. The low-rank approximation error is averaged so that $\varepsilon_{s}$ does not grow with $d_{2}$. A challenge for analyzing $\varepsilon_{s}$ is that the minimizers do not have closed-form expressions. When $(X_{1},X_{2},Y)$ follows a Gaussian distribution, we show~\eqref{approxi_decom} reduces to a weighted low-rank approximation problem (with no randomness) in Appendix~\ref{app:Gaussian}. However, these weighted problems do not have closed-form expressions in general~\citep{srebro2003weighted,dutta2017problem}. Therefore, we leave the further investigations of $\varepsilon_{s}$ 
for future work. For $s=0$, we simply define 

\begin{equation}
    \varepsilon_{0}: = \frac{1}{d_{2}} \E\left[\left\lVert \left(C^{h}(X_{1}) - \widetilde{C}\right) \E[Y|X_{1}] \right\rVert^2\right], \nonumber
\end{equation}
which measures how approximately conditional mean independence (i.e., $C^{h} = \widetilde{C}$) holds. There is a tradeoff between the effective dimension of $\widetilde{X}$ (which are no greater than $s$) and the approximation quality, as the approximation level $\varepsilon_{s}$ is non-increasing as $s$ increases. An important special case of the low-rank approximation is when the encoding functions are smooth. 

\begin{example}[smooth encoding function] Consider a binary classification problem with a scalar predictor $X_{1}$, i.e., $p=d_{1}=1$, assume that  the encoding function $C^{h}: \mathbb{R} \to \mathbb{R}^{d_{2}}$ is twice continuously differentiable, then its second-order Taylor expansion at $a \in \mathbb{R}$ is given by
\begin{align}
C^{h}(x) = C^{h}(a) + \begin{bmatrix}
     \dv {C^{h}}{x} \big|_{x=a} & \dv[2] {C^{h}}{x} \big|_{x=a}
\end{bmatrix} \begin{bmatrix}
    x-a & (x-a)^2
\end{bmatrix}^{\top} +  \mathcal{O}((x-a)^3) \nonumber, 
\end{align}
where we can choose $a$ so that $C^{h}(a) = \widetilde{C}$. This provides a rank-two approximation for $C^{h}(x)-\widetilde{C}$ such that $\varepsilon_{s} = \mathcal{O}((x-a)^3) $, where $s=2$.  This example can be generalized to high-order, multi-class, and multivariate cases, and we provide the details in Appendix~\ref{app:smooth_encode}. 
    
\end{example}

To understand the impact of the size of $\varepsilon_{s}$ on how approximately the matching holds (or how small the approximation error is), we derive the following upper bound via a ridge-type estimator. Unlike ridge-type estimators used in practice, the parameter $\varepsilon_{s}$ that restricts the size of the coefficients is determined by the generating mechanism of $(X_{1},X_{2}, Y)$. 

\begin{lemma}\label{lem:bound_approx_er} Consider $B^{*}$ and $g^{*}$ corresponding to $\varepsilon_{s}$, we have
\begin{align*}
    \min_{\beta}  \mathsf{error}_{\text{apx}}(\beta)  \leq 2(p+\E[\lVert N \rVert^2])\min_{\beta} \left(\lVert \Iv_{p}-\beta\widetilde{C}\rVert^2 + \lVert\beta B^{*}\rVert^2 + \varepsilon_{s}||\beta\rVert^2  \right),
\end{align*}
where the minimum of the RHS is attained at $\beta_{s} := (\widetilde{C}^{\top}\widetilde{C}+(B^{*})^{\top}B^{*}+\varepsilon_{s} I)^{-1}\widetilde{C}$. The equality holds with the RHS being zero when $\varepsilon_{s}=0$.   
\end{lemma}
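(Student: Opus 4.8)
The plan is to bound $\mathsf{error}_{\text{apx}}(\beta)$ for a carefully chosen $\beta$ and then recognize the resulting upper bound as a regularized least-squares objective whose minimizer has the claimed closed form. First I would recall that, by Definition~\ref{def:match} and the identity $\E[X_2|X_1] = C^h(X_1)\E[Y|X_1]$, we have
\begin{align*}
    \mathsf{error}_{\text{apx}}(\beta) = \E\left[\left\lVert \left(\Iv_p - \beta C^h(X_1)\right)\E[Y|X_1]\right\rVert^2\right].
\end{align*}
Writing $C^h(X_1) = \widetilde{C} + \left(C^h(X_1) - \widetilde{C} - B^*g^*(X_1)\right) + B^*g^*(X_1)$ and using the decomposition $\Iv_p - \beta C^h(X_1) = \left(\Iv_p - \beta\widetilde{C}\right) - \beta B^*g^*(X_1) - \beta\left(C^h(X_1) - \widetilde{C} - B^*g^*(X_1)\right)$, I would apply the triangle inequality together with the elementary bound $\lVert u + v + w\rVert^2 \leq 2\lVert u\rVert^2 + 2\lVert v + w\rVert^2 \leq \ldots$ — more precisely, I would split off the ``noise'' term and bound the rest, using $\lVert AB\rVert \le \lVert A\rVert\lVert B\rVert$ and the normalization $\E[\lVert g^*(X_1)\rVert] = 1$ (or $\E[\lVert g^*(X_1)\rVert^2]=1$), so that the cross terms involving $\E[Y|X_1]$ factor out. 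The factor $p + \E[\lVert N\rVert^2]$ should emerge from bounding $\E[\lVert\E[Y|X_1]\rVert^2]$: since $Y$ is a one-hot encoding, $\lVert\E[Y|X_1]\rVert^2 \le \E[\lVert Y\rVert^2|X_1] \le 1 \le p$, but the precise constant $p + \E[\lVert N\rVert^2]$ suggests the bound is routed through $\E[\lVert X_2\rVert^2]$ or a similar quantity via~\eqref{func}; I would check which grouping of the three terms produces exactly the coefficients $\lVert\Iv_p - \beta\widetilde{C}\rVert^2$, $\lVert\beta B^*\rVert^2$, and $\varepsilon_s\lVert\beta\rVert^2$, with the last coming from $\E[\lVert\beta(C^h - \widetilde{C} - B^*g^*)\E[Y|X_1]\rVert^2] \le \lVert\beta\rVert^2 \cdot d_2\varepsilon_s / (\text{something})$ — actually directly $\le \lVert\beta\rVert^2 \E[\lVert(C^h - \widetilde{C} - B^*g^*)\E[Y|X_1]\rVert^2] = d_2\varepsilon_s\lVert\beta\rVert^2$, so I must track the $d_2$ carefully and confirm it is absorbed into the stated constants or that $\varepsilon_s$ here already carries the right normalization.

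Second, having established the inequality $\min_\beta \mathsf{error}_{\text{apx}}(\beta) \le 2(p + \E[\lVert N\rVert^2])\min_\beta\left(\lVert\Iv_p - \beta\widetilde{C}\rVert^2 + \lVert\beta B^*\rVert^2 + \varepsilon_s\lVert\beta\rVert^2\right)$, I would compute the minimizer of the right-hand side. The objective $f(\beta) = \lVert\Iv_p - \beta\widetilde{C}\rVert^2 + \lVert\beta B^*\rVert^2 + \varepsilon_s\lVert\beta\rVert^2$ is a strictly convex quadratic in $\beta$ (strict because of the $\varepsilon_s\lVert\beta\rVert^2$ term when $\varepsilon_s > 0$), so setting the gradient to zero gives $-2(\Iv_p - \beta\widetilde{C})\widetilde{C}^\top + 2\beta B^*(B^*)^\top + 2\varepsilon_s\beta = 0$, i.e. $\beta(\widetilde{C}\widetilde{C}^\top + B^*(B^*)^\top + \varepsilon_s\Iv) = \widetilde{C}^\top$, hence $\beta_s = \widetilde{C}^\top(\widetilde{C}\widetilde{C}^\top + B^*(B^*)^\top + \varepsilon_s\Iv)^{-1}$. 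I should reconcile this with the paper's stated form $\beta_s = (\widetilde{C}^\top\widetilde{C} + (B^*)^\top B^* + \varepsilon_s I)^{-1}\widetilde{C}$ — the discrepancy is just a matter of which side the matrices multiply and the shape conventions for $\beta$ (the paper sometimes writes $\beta \in \mathbb{R}^{p\times d_2}$, sometimes treats $X\beta$); I would align the transposes with the convention fixed in Definition~\ref{def:match}.

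Finally, for the equality claim when $\varepsilon_s = 0$: if $\varepsilon_s = 0$ then $C^h(X_1) = \widetilde{C} + B^*g^*(X_1)$ almost surely (up to an orthogonal term, using $\doteq$), and by Lemma~\ref{lem:iff_match} the exact matching holds, so $\min_\beta\mathsf{error}_{\text{apx}}(\beta) = 0$; simultaneously the right-hand side vanishes at, e.g., $\beta = [\Iv_p,\bs{0}]A^{-1}$ (or the appropriate $\beta$ making $\beta\widetilde{C} = \Iv_p$ and $\beta B^* = \bs{0}$), so both sides are zero. The main obstacle I anticipate is bookkeeping around the constant $p + \E[\lVert N\rVert^2]$ and the $d_2$-normalization in $\varepsilon_s$: I need to be sure the three-term split is done so that the coefficient in front of each term is exactly as stated, which forces a specific grouping in the triangle/Young inequality and a specific bound on $\E[\lVert\E[Y|X_1]\rVert^2]$; getting the constant $2$ and not $3$ or $4$ requires splitting into only two groups (the $\Iv_p - \beta\widetilde{C}$ piece versus the combined redundancy-plus-error piece) rather than three, and then bounding the combined piece by a further factor — I would work out that the sub-additivity $\lVert a + b + c\rVert^2 \le 2\lVert a\rVert^2 + 2\lVert b + c\rVert^2$ plus $\lVert b + c\rVert^2 \le \lVert\beta B^* g^*\E[Y|X_1]\rVert^2 + \lVert\beta(\cdots)\E[Y|X_1]\rVert^2 + \text{cross}$ is the delicate point, and possibly the authors absorb a cross term or use a slightly different normalization of $g^*$ that I should match.
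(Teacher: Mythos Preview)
Your plan matches the paper's proof almost exactly: decompose $C^{h}(X_1)$ into $\widetilde{C}+B^{*}g^{*}(X_1)+(\text{residual})$, apply sub-multiplicativity and the triangle inequality, and read off the ridge-type minimizer. Two points of clarification will fill the gaps you flag. First, the constant $p+\E[\lVert N\rVert^2]$ does \emph{not} come through $\E[\lVert X_2\rVert^2]$; here $N$ is the downstream residual $N=Y-\E[Y|X_1]$ (the paper overloads the symbol), and the bound is simply
\[
\E\bigl[\lVert\E[Y|X_1]\rVert^2\bigr]=\E\bigl[\lVert Y-N\rVert^2\bigr]\le 2\E[\lVert Y\rVert^2]+2\E[\lVert N\rVert^2]\le 2\bigl(p+\E[\lVert N\rVert^2]\bigr),
\]
using $\lVert Y\rVert^2\le p$ for the one-hot vector. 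This is what produces the factor of $2$ rather than $3$, and it is applied only to the two ``structured'' terms $\lVert\Iv_p-\beta\widetilde{C}\rVert^2$ and $\lVert\beta B^{*}\rVert^2$; the $\varepsilon_s\lVert\beta\rVert^2$ term is obtained separately and then pulled inside the common factor in a final loosening step. Second, your worry about a stray $d_2$ in the $\varepsilon_s$ term is well founded: bounding $\E[\lVert\beta(C^h-\widetilde{C}-B^{*}g^{*})\E[Y|X_1]\rVert^2]$ by $\lVert\beta\rVert^2\cdot d_2\varepsilon_s$ is what sub-multiplicativity actually gives, and the paper's own proof is informal on this point. For the $\varepsilon_s=0$ case the paper argues directly that $[\widetilde{C},B^{*}]$ has rank at most $p+s\le d_2$, so $\beta[\widetilde{C},B^{*}]=[\Iv_p,\bs{0}]$ is solvable; your route via Lemma~\ref{lem:iff_match} is equivalent. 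Your transpose remark about $\beta_s$ is also correct: with $\beta\in\mathbb{R}^{p\times d_2}$ the minimizer is $\widetilde{C}^{\top}(\widetilde{C}\widetilde{C}^{\top}+B^{*}(B^{*})^{\top}+\varepsilon_s\Iv_{d_2})^{-1}$, and the paper's stated formula has the factors on the wrong side.
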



\section{Finite Sample Analysis}

\label{sec:finite_sample}

 Let $\beta^{*} \in \arg\min_{\beta} \mathsf{error}_{\text{apx}}(\beta)$ be a fixed true parameter for the downstream task. Recall that the excess risk is defined as $\mathcal{R}(\hat{\beta}_{\lambda}) = \E[\lVert\E[Y|X_{1}]-\hat{\beta}_{\lambda}X)\rVert^2]$. Under conditional mean independence, observe that $X= \E[X_{2}|X_{1}] = \widetilde{C} \E[Y|X_{1}]$ is a feature vector with at most $p$ (out of $d_{2}$) features that are linearly independent. Since the number of classes $p$ is often much smaller than the dimension of the learned representation $d_{2}$, the design matrix $\bs{X}$ for the downstream task is of low rank. This enables a finite-sample bound on the excessive risk $\widetilde{\mathcal{O}}(\frac{p}{n}\sigma^{2})$ with sample complexity $\widetilde{\mathcal{O}}(p)$~\citep{lee2021predicting}, where the bound is independent of the dimension $d_{2}$. In the following, we provide a finite-sample analysis of SSL in the general setting when conditional independence can be violated, based on the low-rank approximation defined in~\eqref{approxi_decom}.
 
 First, we introduce a few technical assumptions. Let $\Sigma$, $\widetilde{\Sigma}$, and $\bar{\Sigma}$ denote the covariance matrix of $X$, $\widetilde{X}$, and $\bar{X} := \widetilde{X} -X$, respectively. 


\begin{assumption}\label{sub_gaus} We assume $N:=Y-\E[Y|X_{1}]$ is $\sigma^2$-sub-Gaussian, and the whitened feature vectors $\widetilde{\Sigma}^{-1/2}\tilde{X}$ and $\bar{\Sigma}^{-1/2}\bar{X}$ are $\rho^2$-sub-Gaussian.~\footnote{  When $\widetilde{\Sigma}$ or $\bar{\Sigma}$ is not invertible, the whitened feature vector is defined through the generalized inverse.}  
\end{assumption}

\begin{assumption}\label{assum:bound_appro}
    There exists $\tilde{b}, \bar{b}\geq 0$ s.t. the following holds almost surely,  
    \begin{itemize}
\item $\lVert \widetilde{\Sigma} ^{-1/2}\widetilde{X}(\E[Y|X_{1}]-\beta^{*}X])^{\top}\rVert \leq \tilde{b}\sqrt{p+s}$\, ;
\item $\lVert \bar{\Sigma} ^{-1/2}\bar{X}(\E[Y|X_{1}]-\beta^{*}X])^{\top}\rVert \leq \bar{b}\sqrt{d_{2}}$\, .
\end{itemize}

\end{assumption}

\begin{remark}
  A similar assumption has been made in~\citep[Assumption~3.3]{lee2021predicting}, which is motivated by~\citep[Condition~4]{hsu2012random}. 
\end{remark}

Let $\lambda_{\text{max}}(A)$ denote the largest eigenvalue of a symmetric real matrix $A$ such that $A \neq \bs{0}$, $\lambda_{\text{min}\neq 0}(A)$ denote its smallest nonzero eigenvalue, and $\{\lambda_{i}(A)\}$ denote the set of all its eigenvalues. When $\widetilde{X}$ is good approximates of $X$, we expect  $\widetilde{\Sigma} - \Sigma$ and $\bar{\Sigma}$ to be close to zero matrices. Therefore, we consider restricting the largest eigenvalues of the two matrices, respectively. A generic bound is provided in~\citep{wolkowicz1980bounds}, that is  $\lambda_{\text{max}}(A) \leq \frac{\text{tr}(A)}{d} + \sqrt{d-1}\cdot s(A)$, where $s(A):= \frac{\text{tr}(A^2)}{d}-\frac{\text{tr}^2(A)}{d^2}$ is the variance of $\{\lambda_{i}(A)\}$. The equality holds when the $d-1$ smallest eigenvalues are equal.  However, this bound can be quite loose when $d$ is large. Instead, we make the following assumption.

\begin{assumption}\label{assump:eigenvs} For some universal constants $c_{1}\geq 0$ and $c_{2}\geq 0$,
\begin{itemize}
    \item     $ \lambda_{\text{max}}(\widetilde{\Sigma} - \Sigma) \leq c_{1} \frac{1}{d_{2}}\cdot \lvert \text{tr}(\widetilde{\Sigma} - \Sigma)\rvert $\, ; 
    \item $ \lambda_{\text{max}}(\bar{\Sigma}) \leq c_{2} \frac{1}{d_{2}}\cdot\text{tr}(\bar{\Sigma}) $\, .
\end{itemize}
\end{assumption}

Both inequalities require that the average eigenvalue is comparable to the largest eigenvalue. The assumption can be unrealistic when $\widetilde{\Sigma} - \Sigma$ or $\bar{\Sigma}$ has mostly zero eigenvalues but a few large positive eigenvalues. We explain why such a case will not happen when $s \ll d_{2}$. Case I: When $\text{rank}(\Sigma) \ll d_{2}$, there exists $s = \text{rank}(\Sigma)$ such that $\widetilde{X}= X$ and $\bar{X} = \bs{0}$, in which case the inequalities are satisfied with $c_{1}=c_{2} =0$. Case II: In settings when $\text{rank}(\Sigma)$ is comparable to $d_{2}$ (i.e., $\text{rank}(\Sigma)$ is a fraction of $d_{2}$), $\widetilde{X}$ satisfies $\text{rank}(\widetilde{\Sigma}) \leq p+s \ll d_{2}$, and thus the rank of $\widetilde{\Sigma} - \Sigma$ should be greater than $d_{2}-p-s$, meaning that most of eigenvalues are nonzero. Similarly, $\bar{X} = X - \widetilde{X}$ should have at least $d_{2}-p-s$ linearly independent components, i.e.,  $\text{rank} (\bar{\Sigma}) \geq d_{2}-p-s$. Given that $\widetilde{X}$ serves as an approximation for $X$ with a lower effective dimension, we make the following technical assumption on the rank. 

\vspace{-.5em}

\begin{assumption}\label{assum:lower_rank}
    $\text{rank}(\widetilde{\Sigma}) \leq \text{rank}(\Sigma)$ and  $\text{rank}(\bar{\Sigma}) \leq \text{rank}(\Sigma)$. 
\end{assumption}
\vspace{-.5em}
Since $X$ has $\text{rank}(\Sigma)$ linearly independent components while $\widetilde{X}$ is introduced to approximate $\text{rank}(\widetilde{\Sigma})$ independent components out of them, $\bar{X} = X - \widetilde{X}$ is expected to have less independent components than $X$.

\vspace{-.5em}
\begin{theorem}\label{thm:ols}
    Under Assumptions~\ref{sub_gaus}---~\ref{assum:lower_rank}, for any $\delta \in (0,1)$, if $n \gg \rho^4(d_{2}+\log\frac{1}{\delta})$, the excess risk of the downstream task induced by $\hat{\beta}_{0}$ is upper bounded, with probability at least $1-\delta$, by
    \begin{align*}
        \mathcal{R}(\hat{\beta}_{0}) &\leq \mathsf{error}_{\text{apx}}^{*}+ \widetilde{\mathcal{O}}\left((1+\varepsilon_{s})\frac{p+s}{n} \sigma^2+\varepsilon_{s}\frac{d_{2}}{n}\sigma^2\right).
    \end{align*}     
\end{theorem}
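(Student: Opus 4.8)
The approach is a bias--variance decomposition of the excess risk, followed by a low-rank splitting of the estimation error into a ``$\widetilde{X}$-part'' of rank at most $p+s$ and a ``$\bar{X}$-part'' whose covariance is small ($\tr(\bar\Sigma)\le d_2\varepsilon_s$), each controlled by concentration under Assumption~\ref{sub_gaus}. Write $\xi := \E[Y\mid X_1]-\beta^{*}X$ for the residual of the best linear fit. The first-order optimality of $\beta^{*}$ gives $\E[\xi X^{\top}]=\bs 0$, so for any data-dependent $\hat\beta_{0}$ (and a fresh $X$),
\begin{equation*}
  \mathcal{R}(\hat\beta_{0}) = \E\bigl[\lVert \xi + (\beta^{*}-\hat\beta_{0})X\rVert^{2}\bigr] = \mathsf{error}_{\text{apx}}^{*} + \lVert (\hat\beta_{0}-\beta^{*})\Sigma^{1/2}\rVert^{2},
\end{equation*}
the cross term vanishing because $\E[X\xi^{\top}]=\bs 0$. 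Writing $\bs Y = \bs X(\beta^{*})^{\top}+\bs\Xi+\bs N$, where $\bs\Xi$ stacks the $\xi_{i}$ and $\bs N$ the label noise, the OLS solution gives $(\hat\beta_{0}-\beta^{*})^{\top}=(\bs X^{\top}\bs X)^{-1}\bs X^{\top}(\bs\Xi+\bs N)$, so it remains to bound $\lVert (\hat\beta_{0}-\beta^{*})\Sigma^{1/2}\rVert^{2}$, and $\mathsf{error}_{\text{apx}}^{*}$ itself is already controlled through Lemma~\ref{lem:bound_approx_er}.

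Next I would exploit the low-rank structure of Lemma~\ref{lem:iff_match}. Since $X=\widetilde X-\bar X$, we have $\Sigma\preceq 2\widetilde\Sigma+2\bar\Sigma$, hence
\begin{equation*}
  \lVert (\hat\beta_{0}-\beta^{*})\Sigma^{1/2}\rVert^{2}\le 2\lVert (\hat\beta_{0}-\beta^{*})\widetilde\Sigma^{1/2}\rVert^{2}+2\lVert (\hat\beta_{0}-\beta^{*})\bar\Sigma^{1/2}\rVert^{2}.
\end{equation*}
For each term I substitute the OLS expression and split into a noise contribution (driven by $\bs N$) and an approximation contribution (driven by $\bs\Xi$, which concentrates around $\bs 0$ since $\E[\xi X^{\top}]=\bs 0$ and is handled by a Bernstein-type bound using the almost-sure control of Assumption~\ref{assum:bound_appro}). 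The noise part of the $\widetilde\Sigma$-term, $\lVert\widetilde\Sigma^{1/2}(\bs X^{\top}\bs X)^{-1}\bs X^{\top}\bs N\rVert^{2}$, is governed by $\tr\bigl(\widetilde\Sigma(\bs X^{\top}\bs X)^{-1}\bigr)$; since $\rank(\widetilde\Sigma)\le p+s$ by the construction of $\widetilde X$ together with Assumption~\ref{assum:lower_rank}, conditioning $\bs X^{\top}\bs X$ below by a constant multiple of $n\Sigma$, using the $\sigma^{2}$-sub-Gaussianity of $\bs N$ and $\E[\lVert N\rVert^{2}]\le 1$ (one-hot $Y$), yields $\widetilde{\mathcal{O}}\bigl((1+\varepsilon_{s})\tfrac{p+s}{n}\sigma^{2}\bigr)$, the factor $1+\varepsilon_{s}$ accounting for the discrepancy between $\widetilde\Sigma$ and $\Sigma$, which is of size controlled by $\varepsilon_{s}$ through Assumption~\ref{assump:eigenvs}. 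The noise part of the $\bar\Sigma$-term is governed by $\tr\bigl(\bar\Sigma(\bs X^{\top}\bs X)^{-1}\bigr)$; here $\tr(\bar\Sigma)=\E[\lVert\bar X\rVert^{2}]\le d_{2}\varepsilon_{s}$ and Assumption~\ref{assump:eigenvs} gives $\lambda_{\text{max}}(\bar\Sigma)\lesssim\varepsilon_{s}$, while $\rank(\bar\Sigma)\le\rank(\Sigma)\le d_{2}$ by Assumption~\ref{assum:lower_rank}, which together yield $\widetilde{\mathcal{O}}\bigl(\varepsilon_{s}\tfrac{d_{2}}{n}\sigma^{2}\bigr)$. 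The two approximation contributions are of the same order as the corresponding noise contributions (with $\tilde b,\bar b$ absorbed into the hidden constants).

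The remaining ingredient is concentration: under Assumption~\ref{sub_gaus}, standard covariance and matrix-Bernstein estimates applied to the whitened vectors $\widetilde\Sigma^{-1/2}\widetilde X$ and $\bar\Sigma^{-1/2}\bar X$, recombined through $X=\widetilde X-\bar X$, show that when $n\gg\rho^{4}(d_{2}+\log\tfrac1\delta)$ one has $\bs X^{\top}\bs X\succeq c\,n\Sigma$ on its range and that $\bs X^{\top}\bs N$ and $\bs X^{\top}\bs\Xi$ are suitably small, all with probability at least $1-\delta$; the $d_{2}$ in the sample-size requirement is because $\rank(\bar\Sigma)$ may be $\Theta(d_{2})$. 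Combining the two displayed terms and invoking the effective-dimension bound of Lemma~\ref{assump:bound_d1} gives the claimed inequality.

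I expect the main obstacle to be the joint control of $\bs X^{\top}\bs X$. Because $\bar\Sigma$ may be nearly full rank yet have all eigenvalues of order $\varepsilon_{s}$, one must simultaneously (i) lower-bound $\bs X^{\top}\bs X$ relative to $\Sigma$ in every relevant direction so that it can be safely inverted, and (ii) show the small-variance directions contribute only the $\varepsilon_{s}\tfrac{d_{2}}{n}\sigma^{2}$ term rather than a bare $\tfrac{d_{2}}{n}\sigma^{2}$ term; this is precisely where Assumption~\ref{assump:eigenvs} (average eigenvalue comparable to the largest) and Assumption~\ref{assum:lower_rank} (the rank ordering) enter, and keeping the $\widetilde X$--$\bar X$ correlations under control throughout the recombination step is the delicate part of the argument.
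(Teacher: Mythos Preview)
Your plan is sound and would yield the stated bound, but it follows a genuinely different route from the paper's. You write the closed-form OLS error $(\hat\beta_{0}-\beta^{*})^{\top}=(\bs X^{\top}\bs X)^{-1}\bs X^{\top}(\bs\Xi+\bs N)$ and split the \emph{population} covariance via $\Sigma\preceq 2\widetilde\Sigma+2\bar\Sigma$, then control each piece through trace bounds of the form $\tr\bigl(\widetilde\Sigma(\bs X^{\top}\bs X)^{-1}\bigr)$ and $\tr\bigl(\bar\Sigma(\bs X^{\top}\bs X)^{-1}\bigr)$ together with Lemma~\ref{lemma:cov}. The paper instead uses the \emph{basic inequality}: from $\lVert\bs Y-\bs X\hat\beta_{0}^{\top}\rVert^{2}\le\lVert\bs Y-\bs X(\beta^{*})^{\top}\rVert^{2}$ it rearranges to bound $\lVert\bs X(\beta^{*}-\hat\beta_{0})^{\top}\rVert^{2}$ by four inner products, splits the \emph{empirical} design $\bs X=\widetilde{\bs X}+\bar{\bs X}$ inside those inner products, bounds the noise pieces by a projection lemma (rank $p+s$ for $\widetilde{\bs X}$, rank $d_{2}$ for $\bar{\bs X}$) and the approximation pieces directly via Assumption~\ref{assum:bound_appro}, and only at the end transfers from $\lVert\bs X\,\cdot\,\rVert$ to $\lVert\Sigma^{1/2}\,\cdot\,\rVert$ by covariance concentration. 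The paper's route never forms or inverts $\bs X^{\top}\bs X$, which matters because $\Sigma$ can be rank-deficient (e.g.\ when $\varepsilon_{s}=0$, $\rank(\Sigma)=p+s$), so your explicit-formula approach must be carried out with pseudoinverses on the range of $\Sigma$; it also entirely sidesteps the $\widetilde X$--$\bar X$ recombination obstacle you flagged, since each inner product is handled on its own and the self-bounding structure $\lVert\bs X(\beta^{*}-\hat\beta_{0})^{\top}\rVert^{2}\lesssim(\cdots)\lVert\bs X(\beta^{*}-\hat\beta_{0})^{\top}\rVert$ closes the argument without reassembly. Your approach makes the effective-dimension picture more transparent through the trace terms, but note that Lemma~\ref{assump:bound_d1} is invoked in the paper only for the ridge Corollary~\ref{coro:beta_l}, not for this OLS theorem.
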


\vspace{-1.5em}
\begin{remark}\label{rmk:es_zero}
When $\varepsilon_{s}=0$, if $n \gg \rho^{4}(p+s+\log\frac{1}{\delta})$, we have $\mathcal{R}(\hat{\beta}_{0}) \leq  \widetilde{\mathcal{O}}\left(\frac{p+s}{n} \sigma^2\right)$.


\end{remark}

The proof of Theorem~\ref{thm:ols} follows a similar idea to that of~\citep[Theorem~$3.5$]{lee2021predicting}; a subtle yet important difference is that we consider approximation errors due to the violations of the exact matching while they consider approximation errors due to choices of the function class $\Psi$. When $\varepsilon_s=0$, the dominating rate of $\mathcal{R}(\hat{\beta}_{0})$ is $\frac{p+s}{n}\sigma^2$, which shows that SSL enjoys a similar sample complexity as shown in~\citep{lee2021predicting} even when conditional independence is violated. We have demonstrated in Lemma~\ref{lem:bound_approx_er} how the approximation error $\mathsf{error}_{\text{apx}}(\beta^{*})$ depends on the approximation level $\varepsilon_{s}$. We also provide the bound with respect to $\hat{\beta}_{\l}$, stated below. The proof is largely followed from~\citep[Theorem~16]{hsu2012random} and we only outline the main steps in Appendix~\ref{app:coro9}. 

\begin{corollary}[Informal]\label{coro:beta_l}
Under suitable assumptions, the excess risk of the downstream task induced by $\hat{\beta}_{\l}$ can be upper bounded by
    \begin{equation}
  \mathcal{R}(\hat{\beta}_{\lambda}) \leq \mathsf{error}_{\text{apx}}^{*} +\E[\lVert(\beta_{\lambda}-\beta^{*})X\rVert^2] + \mathcal{O}\left(\frac{p+s}{n}\left(1+\frac{\varepsilon_{s}}{\lambda}\right)\tilde{\sigma}^2\right), \nonumber    
\end{equation}
with high probability, where $\tilde{\sigma}^2 := \mathsf{error}_{\text{apx}}(\beta_{\lambda}) +\E[\lVert(\beta_{\lambda}-\beta^{*})X\rVert^2]+\sigma^2$. 
\end{corollary}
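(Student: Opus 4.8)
The plan is to adapt the random-design ridge regression analysis of~\citep[Theorem~16]{hsu2012random} to our misspecified setting, where $\E[Y|X_{1}]$ need not be an exact linear function of $X=\psi^{*}(X_{1})$, and then to convert its effective-dimension factor into $(p+s)(1+\varepsilon_{s}/\lambda)$ using the low-rank structure of Section~\ref{sec:low_rank}. \emph{Step~1 (orthogonal decomposition).} Since $\beta^{*}$ satisfies the population normal equations for regressing $\E[Y|X_{1}]$ on $X$, the residual $\E[Y|X_{1}]-\beta^{*}X$ is uncorrelated with every coordinate of $X$, so, conditioning on the labeled sample,
\[
  \mathcal{R}(\hat{\beta}_{\lambda}) = \mathsf{error}_{\text{apx}}^{*} + \E\big[\lVert (\hat{\beta}_{\lambda}-\beta^{*})X\rVert^{2}\big] = \mathsf{error}_{\text{apx}}^{*} + \lVert \Sigma^{1/2}(\hat{\beta}_{\lambda}-\beta^{*})^{\top}\rVert^{2}.
\]
Writing $\hat{\beta}_{\lambda}-\beta^{*}=(\hat{\beta}_{\lambda}-\beta_{\lambda})+(\beta_{\lambda}-\beta^{*})$ isolates the deterministic ridge-bias term $\E[\lVert(\beta_{\lambda}-\beta^{*})X\rVert^{2}]$ (already present in the statement) from the estimation error $\lVert\Sigma^{1/2}(\hat{\beta}_{\lambda}-\beta_{\lambda})^{\top}\rVert^{2}$; as in~\citep{hsu2012random}, I would keep the bias term free of an extra constant by a careful treatment of the cross term rather than a crude triangle inequality.

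\emph{Step~2 (estimation error).} I would view the downstream task as a linear model with effective noise $\xi:=Y-\beta_{\lambda}X=(Y-\E[Y|X_{1}])+(\E[Y|X_{1}]-\beta_{\lambda}X)$, whose conditional second moment is at most $\tilde{\sigma}^{2}$: the $\sigma^{2}$-sub-Gaussian label noise of Assumption~\ref{sub_gaus} contributes $\sigma^{2}$, the misspecification term contributes $\mathsf{error}_{\text{apx}}(\beta_{\lambda})$, and the cross term together with the ridge-bias slack is absorbed. Using the sub-Gaussianity of the whitened features (Assumption~\ref{sub_gaus}) and the bounded-leverage conditions (Assumption~\ref{assum:bound_appro}) in place of the corresponding conditions of~\citep{hsu2012random}, standard matrix-concentration arguments yield, with probability at least $1-\delta$: (i) a two-sided comparison of $\widehat{\Sigma}+\lambda\Iv_{d_{2}}$ with $\Sigma+\lambda\Iv_{d_{2}}$ once $n$ exceeds a threshold of order $d_{\lambda}$ up to logarithmic factors, where $d_{\lambda}:=\tr(\Sigma(\Sigma+\lambda\Iv_{d_{2}})^{-1})$ is the effective dimension; and (ii) concentration of the noise term $\frac{1}{n}\bs{X}^{\top}\bs{\xi}$ around zero. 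Combining (i) and (ii) bounds $\lVert\Sigma^{1/2}(\hat{\beta}_{\lambda}-\beta_{\lambda})^{\top}\rVert^{2}$ by $\frac{\tilde{\sigma}^{2}}{n}\,d_{\lambda}$ up to logarithmic and lower-order terms.

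\emph{Step~3 (effective dimension).} It then remains to show $d_{\lambda}\lesssim (p+s)(1+\varepsilon_{s}/\lambda)$, which is where the low-rank approximation enters. Decompose $X=\widetilde{X}-\bar{X}$ as in Section~\ref{sec:low_rank}: the construction in~\eqref{eq:exact_low_rank} gives $\text{rank}(\widetilde{\Sigma})\le p+s$; $\tr(\bar{\Sigma})\le\E[\lVert\bar{X}\rVert^{2}]=d_{2}\varepsilon_{s}$ by the definition of $\varepsilon_{s}$; Assumption~\ref{assump:eigenvs} gives $\lambda_{\max}(\bar{\Sigma})\le c_{2}\varepsilon_{s}$; and Assumption~\ref{assum:lower_rank} gives $\text{rank}(\bar{\Sigma})\le\text{rank}(\Sigma)$. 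Using $\frac{\mu}{\mu+\lambda}\le\min\{1,\mu/\lambda\}$ for the eigenvalues $\mu$ of $\Sigma$, together with an eigenvalue-interlacing argument that attributes at most $p+s$ eigenvalues of $\Sigma$ to the ``large'' block inherited from $\widetilde{\Sigma}$ and controls the remainder through $\bar{\Sigma}$, one gets $d_{\lambda}\le (p+s)+\frac{1}{\lambda}\tr(\bar{\Sigma}_{\text{tail}})$; the Case~II reasoning accompanying Assumption~\ref{assump:eigenvs} is exactly what keeps the number of non-negligible tail eigenvalues proportional to $p+s$ (each of size $\mathcal{O}(\varepsilon_{s})$) rather than to $d_{2}$. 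Substituting into Step~2 and adding back the two terms of Step~1 gives the stated bound; by Lemma~\ref{lem:bound_approx_er}, $\mathsf{error}_{\text{apx}}^{*}$ and hence $\tilde{\sigma}^{2}$ are themselves controlled by $\varepsilon_{s}$.

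The hard part will be Step~3: showing that the many small eigenvalues of $\Sigma$ produced by the low-rank approximation error contribute only $\mathcal{O}((p+s)\varepsilon_{s}/\lambda)$ to $d_{\lambda}$ rather than $\mathcal{O}(d_{2}\varepsilon_{s}/\lambda)$ --- this is precisely where the ridge penalty earns its keep and where Assumptions~\ref{assump:eigenvs}--\ref{assum:lower_rank} are indispensable. A secondary difficulty is that $\xi$ is not independent of the features, since it carries the deterministic misspecification term $\E[Y|X_{1}]-\beta_{\lambda}X$; the concentration steps must therefore be carried out conditionally, leaning on the bounded-leverage Assumption~\ref{assum:bound_appro} rather than on a homoskedastic independent-noise model.
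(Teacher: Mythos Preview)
Your overall strategy matches the paper's: reduce $\mathcal{R}(\hat{\beta}_{\lambda})$ to $\mathsf{error}_{\text{apx}}^{*}+\E[\lVert(\hat{\beta}_{\lambda}-\beta^{*})X\rVert^{2}]$, split off the population ridge bias $\E[\lVert(\beta_{\lambda}-\beta^{*})X\rVert^{2}]$, bound the remaining fluctuation $\E[\lVert(\hat{\beta}_{\lambda}-\beta_{\lambda})X\rVert^{2}]$ coordinate-wise via \citep[Theorem~16]{hsu2012random}, and finally convert the effective-dimension factor $d_{\lambda}$ into $(p+s)(1+\varepsilon_{s}/\lambda)$. Your Step~1 is in fact cleaner than the paper's---they use the triangle inequality where you correctly invoke the orthogonality of $\E[Y|X_{1}]-\beta^{*}X$ to $X$.

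The substantive divergence is Step~3. The paper isolates this as Lemma~\ref{assump:bound_d1}, and its proof does \emph{not} go through $\bar{\Sigma}$ or eigenvalue interlacing. Instead it uses only the \emph{first} bullet of Assumption~\ref{assump:eigenvs} (together with Assumption~\ref{assum:lower_rank}) to obtain a uniform eigenvalue shift $|\lambda_{i}(\Sigma)-\lambda_{i}(\widetilde{\Sigma})|\le c_{1}\varepsilon_{s}$ (extracted in the proof of Lemma~\ref{lemma:cov}), and then applies the elementary identity
\[
\frac{\mu}{\lambda+\mu}-\frac{\nu}{\lambda+\nu}=\frac{\lambda(\mu-\nu)}{(\lambda+\mu)(\lambda+\nu)}
\]
to compare $d_{\lambda}=\sum_{i}\lambda_{i}(\Sigma)/(\lambda+\lambda_{i}(\Sigma))$ term-by-term with $\sum_{i}\lambda_{i}(\widetilde{\Sigma})/(\lambda+\lambda_{i}(\widetilde{\Sigma}))\le p+s$. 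The correction sum is then controlled by $c_{1}\varepsilon_{s}(p+s)/\bigl(\lambda\cdot\lambda_{\text{min}\neq 0}(\widetilde{\Sigma})\bigr)$, anchored to the (at most) $p+s$ nonzero eigenvalues of $\widetilde{\Sigma}$ via $\lambda_{\text{min}\neq 0}(\widetilde{\Sigma})$. This sidesteps your $d_{2}$-factor worry entirely: $\bar{\Sigma}$ and $\tr(\bar{\Sigma})=d_{2}\varepsilon_{s}$ never appear.

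By contrast, your proposed route through $\bar{\Sigma}$ and ``Case~II reasoning'' is heuristic as stated. The assumptions do not say that only $O(p+s)$ tail eigenvalues of $\Sigma$ are non-negligible, and since $\tr(\bar{\Sigma})=d_{2}\varepsilon_{s}$ rather than $(p+s)\varepsilon_{s}$, a bound of the form $d_{\lambda}\le(p+s)+\lambda^{-1}\tr(\bar{\Sigma}_{\text{tail}})$ yields $(p+s)+d_{2}\varepsilon_{s}/\lambda$ without a further argument. The paper's direct $\Sigma\leftrightarrow\widetilde{\Sigma}$ comparison is the missing ingredient your Step~3 would need.
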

\vspace{-.5em}

For simplicity, the parameters that depend on the choice of $\lambda$ are omitted. The bound requires $p+s \ll n$ even though $n < d_{2}$, thus an approximation~\eqref{approxi_decom} with lower rank is expected in this more challenging setting. The term $\E[\lVert(\beta_{\lambda}-\beta^{*})X\rVert^2]$ relies on the difference between $\beta_{\lambda}$ and $\beta^{*}$, as well as the choice of $\lambda$. When $\beta_{\lambda} =\beta^{*}$ and $\varepsilon_{s}=0$, the dominating rate 
$\frac{p+s}{n}\sigma^2$ is the same as that in Remark~\ref{rmk:es_zero}. This shows that low-rank structures enable SSL to share a similar excess risk upper bound and sample complexity in low- and high-dimensional settings.

\section{Experiments}


 We propose a synthetic dataset and two computer vision tasks to examine the importance of the full rank condition on $C^{h}(x)$ and the low-rank approximation quality. Recall that a necessary condition for the exact matching is that $C^{h}(x)$ has full rank for every $x \in \mathcal{X}_{1}$. For the synthetic dataset, we ensure that $C^{h}(x)$ is of full rank and focus on the low-rank approximation. For image data, since $C^{h}$ is a latent matrix function, it is not straightforward to test whether $C^{h}(x)$ has full rank in general. To this end, we design images of simple geometric shapes, that can be seen as abstractions of real images and show how some geometric properties make the rank condition hold or fail. To further understand the importance of the low-rank approximation, we add background patterns to the MNIST dataset and show that certain patterns can lead to poor low-rank approximation. See more details of the experiments in Appendixes~\ref{sec_app:syn},~\ref{sec_app:cv_shapes}, and~\ref{sec_app:mnist}. SSL approaches have achieved superior performance on large benchmark datasets, while the function class for downstream tasks is often much larger than linear models (e.g., MLPs), which is beyond the scope of our theoretical analysis. The implementation of our experiments is provided at~\url{https://github.com/dukang4655/reconstructive_ssl}. 

\vspace{-1.5em}
\subsection{Synthetic Data}\label{exp:syn}

\begin{wrapfigure}{r}{0.4\textwidth}
\includegraphics[width=.95 \linewidth]{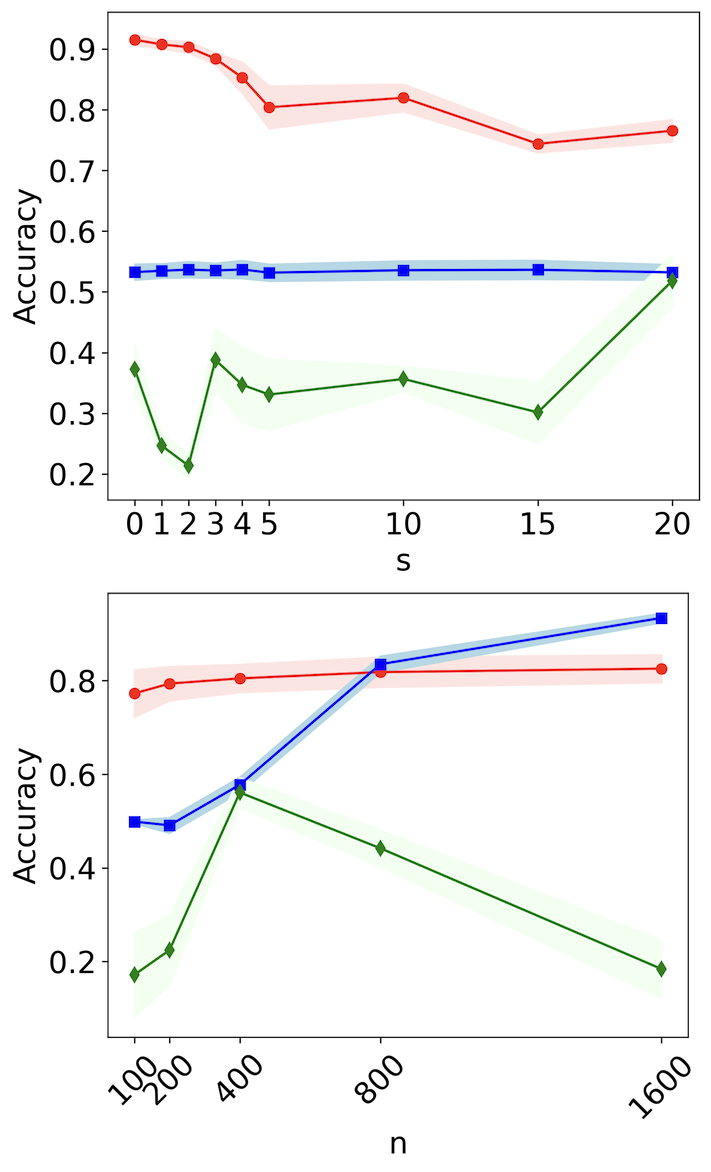}
\vspace{-1em}
  \caption{\small Setting I (top row): $n=300$ and vary $s$. SSL (red), $\text{SL}_1$ (blue), and $\text{SL}_2$ (green). $50$ repeated experiments. Solid lines: mean; shaded region: standard derivation. Setting II (bottom row): $s=5$ and $n$ varies.\label{fig:synthetic}
  } 
  \vspace{-1em}
\end{wrapfigure}
We use a synthetic dataset to verify our theoretical results when $n > d_{2}$. First, we generate $(X_{1},X_{2},Y)$ with $d_{1}=10$, $d_{2}=20$, and $p=2$, where $X_{2} = (A + B g(X_{1}))Y+N$, where $B \in \mathbb{R}^{d_{2} \times s}$. See details of the model parameters in Appendix~\ref{sec_app:syn}. We compare SSL with two supervised learning (SL) procedures in two settings: I. Fix $s =5$ and vary the labeled sample size $n \in \{100,200,400,800,1600\}$; II. Fix $n=300$ and vary the low-rank approximation by considering $s\in \{1,2,\ldots,5,10,15,20\}$. We consider two supervised learning procedures. $\text{SL}_2$: Predicting $Y$ by $X_{1}$, and $\text{SL}_2$: Predicting $Y$ by $(X_{1},X_{2})$. We use MLPs for the pretext task and the two supervised learning procedures. In Fig.~\ref{fig:synthetic}, the performance of $\text{SL}_1$ is roughly invariant with respect to $s$ since it does not use $X_{2}$ for the prediction, making it more robust than $\text{SL}_2$. This verifies that predictions using the parents of $Y$ as predictors (which we call causal predictions) are more robust than non-causal predictions under a small sample size. The superior performance of SSL degrades as $s$ increases. When $s<d_{2}-p = 18$, we have $\varepsilon_{s}=0$ according to the factorization in~\eqref{eq:exact_low_rank}. In Fig.~\ref{fig:synthetic}, when $s=20$, a low-rank approximation~\eqref{approxi_decom} could lead to a large approximation error $\varepsilon_{\bar{s}}$ for $\bar{s}\leq 18$. As a consequence, the advantage of SSL over SL$_{1}$ is much smaller compared with the case with $s=0$. This indicates that a good low-rank approximation is not only sufficient but also necessary for SSL. In the other setting when $s$ is fixed to $5$ but the sample size $n$ varies, as shown in Fig.\ref{fig:synthetic}, the performance of SSL improves slowly with the increasing sample size, since it already achieves performance that to close to the optimal (i.e., the performance of SL$_{1}$ with a large $n$) under a small $n$. SL$_{1}$ starts to catch up with SSL when $n\geq 800$, while the accuracy of SL$_{2}$ does not consistently improve as $n$ increases.


\subsection{Computer Vision Tasks}\label{sec:exp_cv}

\subsubsection{Geometric Shapes (On the Rank Condition)}

    In computer vision applications, it is common that the dimension of the learned representation is much larger than the labeled sample size (i.e., $d_{2} \gg n$) for SSL. We design a simple task to help understand how the patterns in an image make SSL work or fail; the task is inspired by~\citep{gidaris2018unsupervised}, where the pretext task is to predict the rotated angles of images. We consider $X_{1}$ as a random image of some objects, where the objects have random sizes and random locations. The goal is to classify the shape of the object $\bar{Y}$. We created $X_{2}$ by randomly rotating $X_{1}$ by $0$ or $90$ degrees. Observe that the location and the size of an object are redundant features for predicting its shape and orientation. This stylized setting, even though much simpler than real-world images, is designed this way on purpose in order for the redundancy variable to have low-rank approximations. According to~\eqref{nonlinear_scm}, the $j^{th}$ column of $C^{h}(x)$ can be viewed as a feature vector for the rotation angle corresponding to the $j^{th}$ class of objects. Since we consider the classification of two classes (i.e., $p=2$), the condition requires that the two feature vectors should not be similar. To verify this necessary condition, we consider two pairs of objects:

 \begin{figure}[h]
\centering\includegraphics[width=0.99 \linewidth]{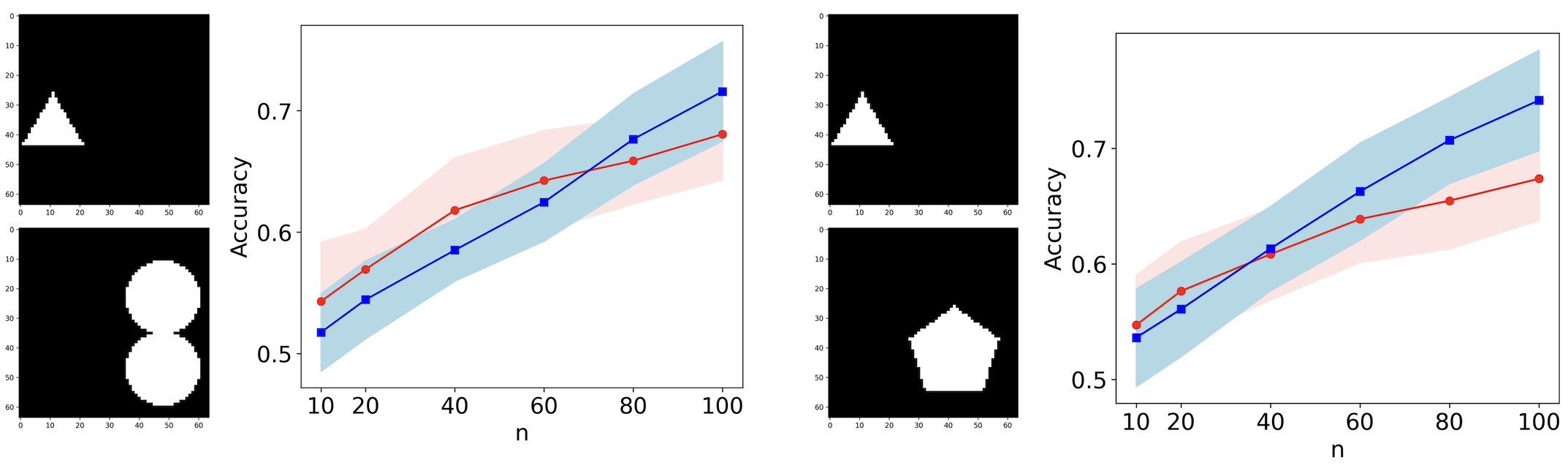}
    \vspace{-.5em}
  \caption{\small (a) Left: Triangle vs. Tangent Circles. (b) Right: Triangle vs. Pentagon. $50$ repeated experiments. SSL (red) and $\text{SL}$ (blue).\label{fig:cv_vary_n_circle}
  } 
    \vspace{-1em}
  \end{figure}

    \smallskip
   \noindent{\bf Triangle vs. Tangent Circles (Fig.~\ref{fig:cv_vary_n_circle}(a)).} In this case, the identification of orientation is based on characteristics specific to the two shapes. For triangles, it is natural to focus on the edges and vertices, while those characteristics are not even defined for circles. Thus, we think the rank condition is approximately satisfied. We examine this observation using Grad-CAM~\citep{selvaraju2017grad} that visualizes the contributing features that the model extracted from the image (see Fig.~\ref{fig:heat_shapes} from Appendix~\ref{sec_app:cv_shapes}). Similarly for the pair of objects below.

       \smallskip 
  \noindent{\bf Triangle vs. Pentagon (Fig.~\ref{fig:cv_vary_n_circle}(b))} In this case, the orientation of the two objects can be identified in similar ways, mainly based on the edges and vertices. In this case, the columns of $C^{h}(x)$ are close to linearly dependent for different $x \in \mathcal{X}_{1}$. As a consequence, the necessary condition for the exact matching is violated.

  We compare SSL with SL under different labeled sample sizes $ n \in \{10,20,40,60,80, 100\}$. For Triangle vs. Tangent Circles,  as shown in Fig.~\ref{fig:cv_vary_n_circle}, SSL consistently outperforms SL for small sample sizes (i.e., $n \leq 60$). The performance of SSL improves slower compared with SL for sufficiently large $n$, since the prediction error of SSL will be dominated by the population error instead of the estimation error.  Recall that our finite-sample bound on the excess risk in Corollary~\ref{coro:beta_l} converges to the sum of the approximation error and an error term depending on $\lambda$ as $n \to \infty$. In contrast, SSL behaves very differently for Triangle vs. Pentagon.  From Fig.~\ref{fig:cv_vary_n_circle}(b), the accuracy of SSL improves slower as $n$ increases, potentially due to the large approximation error. Overall, SSL has almost no advantage over SL. This experiment shows that the rank condition is crucial for SSL.


\subsubsection{Stylized MNIST (On the Low-Rank Approximation)} \label{sec:sty_mnist}

\begin{wrapfigure}{r}{0.43\textwidth}
\includegraphics[width=.99 \linewidth]{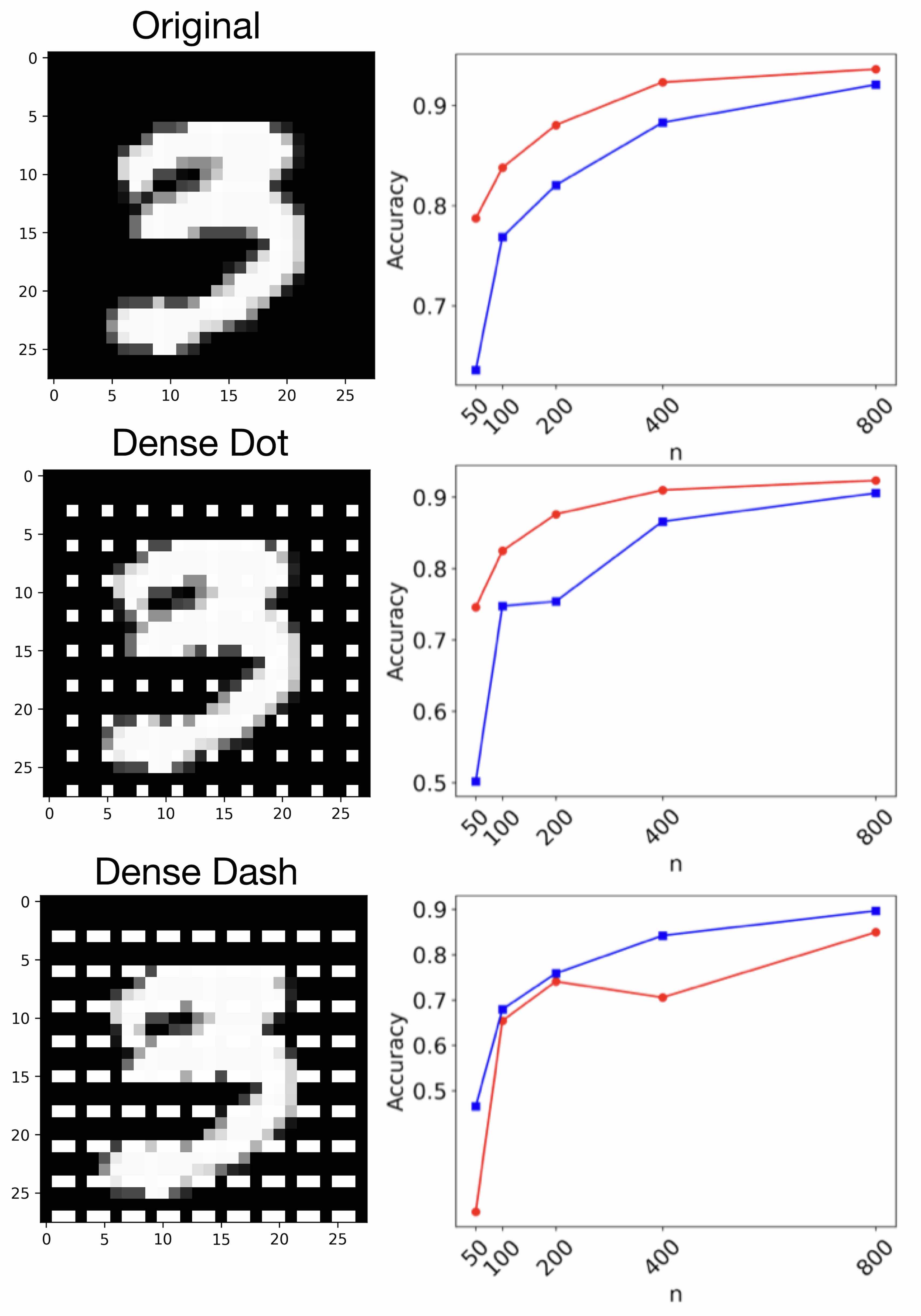}
\vspace{-1.8em}
  \caption{\small SSL(red) and SL(blue). \label{fig:minist_example}
  } 
  \vspace{-1em}
\end{wrapfigure} To examine how the redundancy affects the performance of SSL, we consider the same rotation prediction task for a stylized MNIST dataset illustrated in Fig.~\ref{fig:minist_example}, where the density of the background pattern varies randomly. A key observation is that the dot pattern does not help identify the image orientation, so it is not encoded into the orientation variable as redundancy. On the contrary, one can tell the orientation of the image simply by the orientation of the dash pattern, thus the pretext task will extract features from the dash pattern as redundant information. Again, we use Grad-CAM to visualize our observation in Fig.~\ref{fig:mnist_heat} from Appendix~\ref{sec_app:mnist}. Consequently, a dense dash pattern can lead to poor low-rank approximation. In Fig.~\ref{fig:mnist_resutls} from Appendix~\ref{sec_app:mnist}, the performance of SSL is almost invariant to the density of the dot pattern while the performance of SL drops as the density increases. In contrast, SSL is quite sensitive to a sparse dash pattern and the performance gets worse as the density increases (see Fig.~\ref{fig:minist_example}). We have tested the dot vs. dash patterns for the geometric shape images, and similar results are observed as shown in Fig.~\ref{fig:geo_pat_resutls} from Appendix~\ref{sec_app:cv_shapes}. 



\section{Discussion}


Many important questions remain to be studied and we list a few in this section. One natural next step is to study nonlinear function classes for the downstream task and characterize the corresponding sufficient and necessary conditions. Since our theoretical results can potentially provide guidance for developing SSL procedures, especially for designing pretext tasks, it would be worthwhile to design systematic and extensive experiments to better bridge the theories and practical designs. Besides the superior performance under limited labeled samples, the other major advantage of SSL is that the learned representation can be useful for a diverse class of downstream tasks; a theoretical understanding of its ability to generalize to new tasks or unseen environments (e.g., by exploiting invariance~\cite{du2023learning}) is of great importance. 


\bibliography{ref.bib}

\appendix
\section{Conditional Mean Independence}\label{app:conditional}

Recall that when $C^{h}$ is a constant function, we have $X_{2} = \widetilde{C} Y+N$ and we require $\E[N|X_{1},Y]=0$ rather than $\E[N|Y]=0$.

\begin{proposition}\label{prop1}
Model~\eqref{nonlinear_scm} holds with 
$C^{h}$ being a constant function if and only if $\E[X_{2}|X_{1},Y]= \E[X_{2}|Y]$.   
\end{proposition}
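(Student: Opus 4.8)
The plan is to prove the equivalence $\E[X_2 \mid X_1, Y] = \E[X_2 \mid Y]$ if and only if the encoding function $C^h$ in model~\eqref{nonlinear_scm} is constant. Recall from the derivation below~\eqref{func} that $h(X_1, Y) := \E[X_2 \mid X_1, Y]$, and the one-hot structure of $Y$ gives the identity $h(X_1, Y) = C^h(X_1) Y$ with $C^h(x) = [h(x, e_1) \mid \cdots \mid h(x, e_p)]$, where $e_j$ is understood as the one-hot encoding $Y = e_j$, i.e., $\bar Y = y_j$. So $h(x, e_j)$ is really a shorthand for $\E[X_2 \mid X_1 = x, \bar Y = y_j]$. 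The key point is that the map $h \leftrightarrow C^h$ is a bijection: the columns of $C^h(x)$ are exactly the conditional means $\E[X_2 \mid X_1 = x, \bar Y = y_j]$ for $j = 1, \dots, p$, and one must also track the excluded class $\bar Y = y_{p+1}$, whose conditional mean is recovered from $h(x, \bs{0}) = C^h(x) \cdot \bs{0} = \bs{0}$? — no: when $\bar Y = y_{p+1}$ the one-hot vector is $\bs{0}$, so we would need $\E[X_2 \mid X_1 = x, \bar Y = y_{p+1}] = h(x,\bs 0)$, which is $\bs 0$ only if $X_2$ has been centered. I would handle this by recalling that the intercept has been omitted (see the remark below Definition~\ref{def:match}), or more cleanly by noting that $h$ as a function on $\mathcal{X}_1 \times \bar{\mathcal{Y}}$ is what matters and $C^h$ merely repackages its values on the non-excluded classes while the excluded class is pinned by the centering convention; I will state this carefully rather than gloss it.

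First I would prove the easy direction ($\Leftarrow$): if $C^h \equiv \widetilde C$ is constant, then $\E[X_2 \mid X_1, Y] = C^h(X_1) Y = \widetilde C Y$, which is a function of $Y$ alone; taking conditional expectation given $Y$ and using the tower property, $\E[X_2 \mid Y] = \E[\,\E[X_2 \mid X_1, Y] \mid Y\,] = \widetilde C Y = \E[X_2 \mid X_1, Y]$, so conditional mean independence holds. Second, the forward direction ($\Rightarrow$): suppose $\E[X_2 \mid X_1, Y] = \E[X_2 \mid Y]$ almost surely. The right-hand side is a (measurable) function of $Y$ only, call it $g(Y)$; since $Y$ is the one-hot encoding of a categorical $\bar Y$ taking $p+1$ values, $g(Y) = \sum_{j} g(e_j) \mathbbm{1}_{\bar Y = y_j}$ is determined by the $p+1$ vectors $g(e_j)$, equivalently by a constant matrix acting on $Y$ plus a constant offset. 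Comparing with $\E[X_2 \mid X_1, Y] = C^h(X_1) Y$ (again up to the centering offset), we get $C^h(X_1) Y = \widetilde C Y$ almost surely, where $\widetilde C$ is the constant matrix whose $j$-th column is $g(e_j)$. Evaluating on the event $\bar Y = y_j$ — which has positive probability for each $j$ by the implicit assumption that all classes are realized (and, if needed, conditioning on a value of $X_1$ in the support) — yields $h(X_1, e_j) = g(e_j)$ almost surely for each $j$, i.e., each column of $C^h(X_1)$ is almost surely equal to a constant vector. Hence $C^h(X_1) = \widetilde C$ almost surely, so $C^h$ is (a.s.) constant, which is exactly what model~\eqref{nonlinear_scm} with constant $C^h$ asserts.

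The main obstacle, and the only genuinely delicate point, is the bookkeeping around the excluded class and the suppressed intercept: $Y$ being the one-hot encoding with one class dropped means that $C^h(X_1) Y$ literally vanishes on the event $\bar Y = y_{p+1}$, so the cleanest statements are in terms of $h: \mathcal{X}_1 \times \bar{\mathcal{Y}} \to \mathbb{R}^{d_2}$ and the equivalence is "$h$ does not depend on its first argument." I would therefore phrase the proof so that the bijection $h \leftrightarrow C^h$ is invoked at the level of functions on all $p+1$ classes (with the convention $\E[X_2 \mid \bar Y = y_{p+1}] = \bs 0$ coming from the omitted intercept, so that this class too is consistent with $C^h(x)\cdot\bs 0$), and the "constant $C^h$" conclusion then follows from "constant $h(\cdot, y)$ for every fixed $y$." A secondary, routine point is measurability/a.s.-versus-everywhere: all equalities should be read as holding $\Ps_{X_1}$-almost surely, and "$C^h$ constant" should be interpreted modulo a null set, which is harmless for all downstream uses of the proposition.
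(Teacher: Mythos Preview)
Your proposal is correct and follows essentially the same route as the paper: both directions use the identity $\E[X_2\mid X_1,Y]=C^h(X_1)Y$ together with the tower property for one direction, and for the other direction expand $\E[X_2\mid Y]=\sum_{i=1}^{p}\E[X_2\mid Y=e_i]\mathbbm{1}_{\bar Y=y_i}=:\widetilde C\,Y$ to read off a constant coefficient matrix. Your additional care about the excluded class and the suppressed intercept is sound and more explicit than the paper, which simply sums over $i=1,\ldots,p$ and relies on the omitted-intercept convention without comment.
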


\begin{proof}
    First, assume that $X_{2} = \widetilde{C}Y +N$ holds with $\E[N|X_{1},Y]=0$. It follows immediately that
\begin{equation}
    \E[X_{2}|X_{1},Y] = \E[\widetilde{C}Y+N|X_{1},Y] = \widetilde{C}Y \nonumber
\end{equation}
and $\E[X_{2}|Y]= \E[\widetilde{C}Y+N|Y] = \widetilde{C}Y$, where we use the fact that $\E[N|Y]=\E[\E[N|X_{1},Y]|Y] = 0$. Thus we have $\E[X_{2}|X_{1},Y]= \E[X_{2}|Y]$. Now we prove the other direction. Assume that $\E[X_{2}|X_{1},Y]= \E[X_{2}|Y]$, then
\begin{equation}
    \E[X_{2}|X_{1},Y] = \E[X_{2}|Y] = \sum_{i=1}^{p}\E[X_{2}|Y=e_i]\mathbbm{1}_{\bar{Y}=y_i} := \widetilde{C} Y,  \nonumber
\end{equation}
where $\widetilde{C}$ has columns $\E[X_{2}|Y=e_i]$'s, which implies model~\eqref{prop1} in the form of $X_2 = \widetilde{C}Y+N$.   
\end{proof}

\section{Proof of Proposition~\ref{def:alter}}

\begin{proof}
First, according to Definition~\ref{def:exact}, the exacting matching is equivalent to 
\begin{equation}
    \E[Y|X_{1}] = \beta \E[X_{2}|X_{1}] \label{eq:def2_equiv}
\end{equation}
for some $\beta \in \mathbb{R}^{p \times d_{2}}$. By our assumption that $\Cov(\E[Y|X_{1}])$ has full rank, $\beta$ has to have full rank when the exact matching holds. Plugging $\E[X_{2}|X_{1}] = C^{h}(X_{1})\E[Y|X_{1}]$ into~\eqref{eq:def2_equiv}, we have
    \begin{equation}
        (\Iv_{p}- \beta C^{h}(X_{1}))\E[Y|X_{1}] = \bs{0}, \nonumber
    \end{equation}
for some $\beta \in \mathbb{R}^{p \times d_{2}}$. Equivalently,
\begin{equation}
     \Iv_{p}- \beta C^{h}(X_{1}) = \bar{O}(X_{1}) \label{eq:pf_equiv}
\end{equation}
for some $\bar{O}: \mathcal{X}_{1} \to  \mathbb{R}^{p \times p}$  such that  $\bar{O}(X_{1})\E[Y|X_{1}] = \bs{0}$. Since $\beta$ has full rank,~\eqref{eq:pf_equiv} is equivalent to
\begin{equation}
     \beta (C^{h}(X_{1}) - \beta^{-1}\bar{O}(X_{1})) =  \Iv_{p},
\end{equation}
where $\beta^{-1}$ denotes the right inverse of $\beta$ and $O(X_{1}):= \beta^{-1}\bar{O}(X_{1})$ satisfies $ O(X_{1}) \E[Y|X_{1}] = \bs{0}$. 
\end{proof}

\section{Proof of Lemma~\ref{lem:iff_match}}


\begin{proof}
For simplicity of notation, we prove the lemma without considering the orthogonal term, while the orthogonal term can be directly added to the final expression of $C^{h}$. Recall the dimension of $\beta$ is $p$ by $d_{2}$.  According to Proposition~\ref{def:alter}, it is sufficient to show that $\beta C^{h}(x) = \Iv_{p}$ is equivalent to~\eqref{iif_cond}.  The ``if'' part is immediate since $\beta = [C^{-1},\bs{0}] A^{-1}$ will lead to $\beta C^{h}(x) = \Iv_{p}$, $\forall x \in \mathcal{X}_{1}$. In the following, we prove the other direction. When the exact matching holds, recall that the full rank of $\Cov(\E[Y|X_{1}])$ implies that $\beta$ has full row rank since $p<d_{2}$. The QR decomposition of $\beta^{\top}$ gives $\beta =[\bar{C}, \bs{0}] \bar{A}$, where $\bar{C} \in \mathbb{R}^{p \times p}$ is an invertible lower-triangular matrix and $\bar{A}\in \mathbb{R}^{d_{2} \times d_{2}}$ is an orthonormal matrix. Then, $\beta C^{h}(x) = \Iv_{p}$ implies $[\bar{C}, \bs{0}]B(x) = \Iv_{p}$ with $B(x) := \bar{A}C^{h}(x)$. Due to the zero columns in $[\bar{C}, \bs{0}]$, the first $p$ rows of $B(x)$ has to be $\bar{C}^{-1}$, while the other rows, denoted by $R(x)$, can be arbitrary. Therefore, we obtain
\begin{equation}
    C^{h}(x) = \bar{A}^{-1}B(x) =  \bar{A}^{-1}\begin{bmatrix}
        \bar{C}^{-1}\\
        R(x)
    \end{bmatrix} =  \widetilde{A}^{-1}\begin{bmatrix}
        \Iv_{p}\\
        R(x)
    \end{bmatrix},\quad \forall x\in\mathcal{X}_{1},  \nonumber
\end{equation}
where $\widetilde{A}^{-1}$ is the product of $\bar{A}^{-1}$ and some elementary matrices introduced to transform $\bar{C}^{-1}$ to an identity matrix $\Iv_{p}$. 
\end{proof}

\section{Low-rank approximation of Smooth Encoding Functions}\label{app:smooth_encode}

In this section, we study how the smoothness of the encoding function enables the low-rank approximation. Specifically, we will construct the approximation in~\eqref{approxi_decom} explicitly with polynomial functions. For simplicity, we present the idea for second-order approximation, while the higher-order cases can be derived in a similar manner. 

Let $C^{h}: \mathcal{X}_{1} \to \mathbb{R}^{d_{2} \times p}$ be a twice continuously differentiable matrix function, the Taylor expansion of its $j^{th}$ column $C^{h}_{j}$ at $a \in \mathcal{X}_{1}$ is given by
\begin{align}
  C_{j}^{h}(x) &= C_{j}^{h}(a) + (C_{j}^{h})'(a)(x-a) + \begin{bmatrix}
       \frac{1}{2}   (x-a)^{\top}(C_{1j}^{h})''(a)(x-a)\\
\cdots\\
 \frac{1}{2}   (x-a)^{\top}(C_{d_{2}j}^{h})''(a)(x-a)
\end{bmatrix}+ \mathcal{O}(||x-a||^3) \nonumber\\
  &:=C_{j}^{h}(a) + A_{j}(a)\phi(x) + \mathcal{O}(||x-a||^3) \label{eq_smooth_matrix},
\end{align}
where the $i^{th}$ row of $(C_{j}^{h})'(a) \in \mathbb{R}^{d_{2} \times d_{1}}$ is the derivative of $C^{h}_{ij}$ evaluated at $x=a$ and $(C_{ij}^{h})''(a)$ is the Hessian matrix of $C_{ij}^{h}$ evaluated at $x=a$. We represent $ C_{j}^{h}(x)$ in a matrix form in~\eqref{eq_smooth_matrix} by introducing $\phi(x) = (x_{1}-a_{1},x_{2}-a_{2},\ldots,x_{d_{1}}-a_{d_{1}},(x_{1}-a_{1})^2, (x_{1}-a_{1})(x_{2}-a_{2}),(x_{1}-a_{1})(x_{3}-a_{3}), \ldots,(x_{d_{1}-1}-a_{d_{1}-1})(x_{d_{1}}-a_{d_{1}}),(x_{d_{1}}-a_{d_{1}})^2)^{\top} \in \mathbb{R}^{d_{1}+d_{1}^2}$ and the coefficient matrix $A_{j}(a) \in \mathbb{R}^{d_{2} \times (d_{1}+d_{1}^2)}$ consisting of the (scaled) first two order derivatives.  This allows us to approximate $C^{h}$ by
\begin{equation}
       C^{h}(x) = C^{h}(a)+ \begin{bmatrix}
       A_{1}(a)\phi(x), A_{2}(a)\phi(x), \ldots, A_{p}(a)\phi(x)  \nonumber
   \end{bmatrix} + \mathcal{O}(||x-a||^3).
\end{equation}
Let the maximum rank of the matrices $\{A_{i}(a): i=1,\ldots,p\}$ be $s$, then there exists $B \in \mathbb{R}^{d_{2} \times s}$ and $\{D_{i}(a) \in \mathbb{R}^{s \times (d_{1}+d_{1}^2)}: i=1,\ldots,p\}$ such that
\begin{equation}
    C^{h}(x) =  C^{h}(a) + B  \begin{bmatrix}
       D_{1}(a)\phi(x), D_{2}(a)\phi(x), \ldots, D_{p}(a)\phi(x)  \label{eq:approx_tay}
   \end{bmatrix} + \mathcal{O}(||x-a||^3),
\end{equation}
which is enabled by the decomposition $A_{i}(a) = BD_{i}(a)$ for each $i$. With the continuity of $C^{h}(x)$, we can fix $a$ so that $C^{h}(a) = \widetilde{C}$ by the mean value theorem. The high-order reminder term can be ignored when the third derivatives of $C^{h}$ are all zeros. If this is not the case, we can include high-order terms in $\phi$ until the reminder term is small enough. If the maximum rank $s$ is not small, one can still consider the low-rank approximation of $\{A_{i}(a)\}$, and there will be an additional approximation error term in~\eqref{eq:approx_tay}.

\section{SSL under Gaussian Distribution}\label{app:Gaussian}

Even though our formulation focuses on the classification setting, the extension of our results to the Gaussian case is straightforward. Assume $\{X_{1},X_{2},Y\}$ are jointly Gaussian, where $Y$ is a scalar Gaussian variable. Let $\Sigma_{XZ} := \Cov(X,Z)$ for any random vectors $X$ and $Z$. Then, $X := \E[X_{2}|X_{1}] = \Sigma_{X_{2}X_{1}}\Sigma_{X_{1}X_{1}}^{-1}X_{1}$ and $\E[Y|X_{1}] = \Sigma_{YX_{1}}\Sigma_{X_{1}X_{1}}^{-1}X_{1}$. If $d_{2} \geq d_{1}$ and $\Sigma_{X_{2}X_{1}}$ has a full rank, it is straightforward to see that an exact matching holds with $\beta = \Sigma_{YX_{1}}\Sigma_{X_{2}X_{1}}^{\dagger}$. Concretely, there exists $A \in \mathbb{R}^{d_{1} \times d_{2}}$ and $b \in \mathbb{R}^{d_{2}}$ such that 
\begin{equation}
    X_{2} = h(X_{1},Y) + N = AX_{1}+ bY + N,
\end{equation}
where $\E[N|X_{1},Y]=\bs{0}$.

Since the encoding function is not well-defined when $Y$ is continuous, we formulate the low-rank approximation by
\begin{align}
  \varepsilon_{s} := \min_{\hat{X}} \E\left[\left\lVert X -  \hat{X}\right\rVert^2\right]  
  &= \min_{B \in \mathbb{R}^{d_{2}\times d_{1}}: \text{rank}(B)=s} \E\left[\left\lVert\Sigma_{X_{2}X_{1}}\Sigma_{X_{1}X_{1}}^{-1}X_{1}  - B X_{1} \right\rVert^2\right]  \nonumber \\
  &= \min_{B \in \mathbb{R}^{d_{2}\times d_{1}}: \text{rank}(B)=s} \left\lVert (\Sigma_{X_{2}X_{1}}\Sigma_{X_{1}X_{1}}^{-1} - B)\Sigma_{X_1X_1}^{1/2} \right\rVert^2,  \nonumber
\end{align}
which aims to find a weighted low-rank approximation for $\Sigma_{X_{2}X_{1}}\Sigma_{X_{1}X_{1}}^{-1}$. 
 \section{Proof of Lemma~\ref{lem:bound_approx_er}}
\begin{proof}
We have
\begin{align*}
   \min_{\beta}  \mathsf{error}_{\text{apx}}(\beta) &=  \min_{\beta} \E[\lVert\E[Y|X_{1}]-\beta C^{h}(X_{1})\E[Y|X_{1}]]\rVert^2]\\
   &\leq  \min_{\beta}\E[\lVert\E[Y|X_{1}]\rVert^2]\E[\lVert \Iv_{p}-\beta(\widetilde{C}+B^{*}g^{*}(X_{1}))\rVert^2] + \varepsilon_{s}\lVert \beta\rVert^2  \\
   & \leq \min_{\beta} 2(p+\E[\lVert N \rVert^2])\left(\lVert \Iv_{p}-\beta\widetilde{C}\rVert^2 +  \lVert\beta B^{*}\rVert^2\right) + \varepsilon_{s}\lVert\beta\rVert^2 \\
   &\leq 2(p+\E[\lVert N \rVert^2])\min_{\beta} \left(\lVert \Iv_{p}-\beta\widetilde{C}\rVert^2 + \lVert\beta B^{*}\rVert^2 + \varepsilon_{s}\lVert\beta\rVert^2  \right), 
\end{align*}
where the first inequality follows from the sub-multiplicativity of the matrix norm and the triangle inequality and the last two inequalities are due to the triangle inequality, and the fact that $\E[\lVert\E[Y|X_{1}]\rVert^2]= \E[\lVert Y-N \rVert^2] \leq 2\E[\lVert Y\rVert^2] + 2\E[\lVert N\rVert^2] \leq 2(p +\E[\lVert N \rVert^2])$. When $\varepsilon_{s}=0$, note that $[\widetilde{C},B^{*}]$ has rank at most $p+s \leq d_{2}$, thus there exists at least one solution $\beta \in \mathbb{R}^{p \times d_{2}}$ for the equation $\beta[\widetilde{C},B^{*}] = [\beta\widetilde{C},\beta B^{*}] = [\Iv_{p},\bs{0}]$.   The expression of $\beta_s$ is a standard expression of ridge-type estimators.
\end{proof}

\section{Proof of Theorem~\ref{thm:ols}}\label{app:thmols}
\begin{lemma}[Concentration on the covariance matrix~\citep{lee2021predicting}]\label{lem: concen_cov} For $\bs{X} \in \mathbb{R}^{n \times d}$ with \iid rows, where each row is $\rho^2$-sub-Gaussian with covariance $\Sigma$. For any $B \in \mathbb{R}^{d \times m}$ with rank $k$ that is independent of $\bs{X}$. For any $\delta \in (0,1)$, if $n \gg \rho^4 (k+\log(\frac{1}{\delta}))$, with probability at least $1-\frac{\delta}{10}$, we have
\begin{equation}
    0.9 B^{\top} \Sigma B \preceq \frac{1}{n}   B^{\top} \bs{X}^{\top}\bs{X} B  \preceq 1.1 B^{\top} \Sigma B. \nonumber 
\end{equation}
\end{lemma}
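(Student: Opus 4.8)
The plan is to deduce the claim from a single finite-dimensional covariance-concentration estimate, obtained from it by whitening and projecting onto the range of $B^{\top}\Sigma B$; the estimate itself is then proved by the standard covering-net-plus-Bernstein argument. Throughout, $B$ (hence every deterministic object built from it below) is independent of the rows, so all transformed vectors remain i.i.d.

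First I would set $W := B^{\top}\Sigma B$, which is positive semidefinite of rank $k' \le k$ (if $W=\bs{0}$ the claim is trivial, so assume $k'\ge 1$). Each row $X_i$ has mean zero, so $B^{\top}X_i$ has mean zero and covariance $W$ and therefore lies almost surely in $\mathrm{range}(W)$; writing $B^{\top}X_i = Q\zeta_i$ with $Q\in\mathbb{R}^{m\times k'}$ an orthonormal basis of $\mathrm{range}(W)$, the vectors $\zeta_i = Q^{\top}B^{\top}X_i\in\mathbb{R}^{k'}$ have covariance $\Lambda := Q^{\top}WQ\succ 0$. Since $Q$ has orthonormal columns and both $\frac1n\sum_i (B^{\top}X_i)(B^{\top}X_i)^{\top}$ and $W$ live in $\mathrm{range}(Q)$, the target bound $0.9\,W \preceq \frac1n\sum_i (B^{\top}X_i)(B^{\top}X_i)^{\top}\preceq 1.1\,W$ is equivalent (by congruence with $Q$, then with $\Lambda^{-1/2}$) to $\bigl\lVert \frac1n\sum_i \eta_i\eta_i^{\top} - \Iv_{k'}\bigr\rVert_{\mathrm{op}}\le 0.1$, where $\eta_i := \Lambda^{-1/2}Q^{\top}B^{\top}X_i$ has covariance $\Iv_{k'}$. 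Crucially, $\eta_i$ stays sub-Gaussian: for a Euclidean unit $v$, $v^{\top}\eta_i = (M^{\top}v)^{\top}X_i$ with $M := \Lambda^{-1/2}Q^{\top}B^{\top}$, and since $M\Sigma M^{\top} = \Iv_{k'}$ one has $(M^{\top}v)^{\top}\Sigma(M^{\top}v) = \lVert v\rVert^2 = 1$, so the covariance-aware form of the sub-Gaussian hypothesis gives $\E[e^{t v^{\top}\eta_i}] \le e^{t^2\rho^2 (M^{\top}v)^{\top}\Sigma(M^{\top}v)/2} = e^{t^2\rho^2/2}$; thus each $\eta_i$ is $\rho^2$-sub-Gaussian.

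Next I would invoke (or reprove) the standard bound for i.i.d. mean-zero $\rho^2$-sub-Gaussian $\eta_1,\dots,\eta_n\in\mathbb{R}^{k'}$ with covariance $\Iv_{k'}$. Fix a $\tfrac14$-net $\Nc$ of the unit sphere with $\lvert\Nc\rvert\le 9^{k'}$. For each $u\in\Nc$, the scalars $(u^{\top}\eta_i)^2-1$ are i.i.d., centered, and sub-exponential with parameter $\lesssim\rho^2$, so Bernstein's inequality bounds the probability that $\lvert\frac1n\sum_i((u^{\top}\eta_i)^2-1)\rvert$ exceeds $s$ by $2\exp(-c n\min(s^2/\rho^4,\,s/\rho^2))$. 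A union bound over $\Nc$ together with the net comparison $\lVert \frac1n\sum_i\eta_i\eta_i^{\top}-\Iv_{k'}\rVert_{\mathrm{op}}\le 2\max_{u\in\Nc}\lvert\frac1n\sum_i((u^{\top}\eta_i)^2-1)\rvert$ shows that the event $\{\lVert\cdot\rVert_{\mathrm{op}}\le 0.1\}$ holds with probability at least $1-2\cdot 9^{k'}\exp(-c'n/\rho^4)$ (taking $s=0.05$; after replacing $\rho^2$ by $\max(\rho^2,1)$, which only weakens the hypothesis, the minimum is $s^2/\rho^4$). Choosing $n\gg\rho^4(k'+\log\frac1\delta)$, which dominates $\frac{1}{c'}\rho^4(k'\log 9 + \log\frac{20}{\delta})$, makes this probability at least $1-\delta/10$; since $k'\le k$, $n\gg\rho^4(k+\log\frac1\delta)$ suffices, giving exactly the stated event.

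The main obstacle is the reduction in the second paragraph, not the concentration inequality. One must ensure that after restricting to $\mathrm{range}(B^{\top}\Sigma B)$ and whitening by $\Lambda^{-1/2}$ the transformed vectors are sub-Gaussian with parameter comparable to $\rho^2$ and are not inflated by the conditioning of $\Sigma$ on $\mathrm{range}(B)$; this is precisely where the matrix/covariance-aware reading of ``$\rho^2$-sub-Gaussian with covariance $\Sigma$'' is needed, and it is also where one handles the rank-deficient cases ($\mathrm{rank}(B^{\top}\Sigma B)<k$, or $\Sigma$ singular) by working on the relevant range with generalized inverses. The remaining ingredients---Bernstein for quadratic forms of sub-Gaussian scalars, the covering bound $\lvert\Nc\rvert\le 9^{k'}$, and the net comparison for the operator norm---are entirely standard.
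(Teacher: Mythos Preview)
The paper does not prove this lemma; it is quoted verbatim from \citep{lee2021predicting} and used as a black box in the proof of Theorem~\ref{thm:ols}. Your proposal therefore goes well beyond what the paper does: you supply a complete argument via whitening to an isotropic sub-Gaussian ensemble in $\mathbb{R}^{k'}$ and then the standard $\varepsilon$-net plus Bernstein route for the empirical covariance. This is the canonical proof of such statements (essentially the argument in Vershynin's notes and in \citep{hsu2012random}), and the reduction you describe---projecting $B^{\top}X_i$ onto $\mathrm{range}(B^{\top}\Sigma B)$ and then whitening by $\Lambda^{-1/2}$---is exactly the right way to turn the rank-$k$ claim into a $k'$-dimensional isotropic one.

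One point is worth flagging explicitly, and you already identify it as the main obstacle: the step $\E[e^{t v^{\top}\eta_i}]\le e^{t^2\rho^2/2}$ requires the \emph{covariance-scaled} notion of sub-Gaussianity, i.e.\ $\E[e^{t u^{\top}X_i}]\le e^{t^2\rho^2\, u^{\top}\Sigma u/2}$ for all $u$, or equivalently that the whitened vector $\Sigma^{-1/2}X_i$ is $\rho^2$-sub-Gaussian. The paper's bare Notation-section definition (unit-vector $u$, bound $e^{t^2\rho^2/2}$) does not by itself give this; however, Assumption~\ref{sub_gaus} in the paper is phrased precisely in the whitened form, and this is also how the cited source \citep{lee2021predicting} uses the hypothesis, so your reading is the intended one. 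Under that reading your argument is correct; without it the whitening step can blow up the sub-Gaussian parameter by a condition-number factor and the constants $0.9,1.1$ would not follow.
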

\begin{lemma}[\citep{lee2021predicting}]\label{lem:proj}
Let $\bs{P} \in \mathbb{R}^{n \times n}$ be a projection matrix and let $\bs{Z} \in \mathbb{R}^{n \times d}$ be a matrix with \iid rows, where each row of $\bs{Z}$ is mean zero (conditioning on $\bs{P}$ being rank $k$) $\sigma^{2}$-sub-Gaussian. For any $\delta \in (0,1)$, with probability at least $1-\delta$,
 \begin{equation}
     \lVert \bs{P}\bs{Z}  \rVert \lesssim \sigma \sqrt{k(1+\log(k/\delta))}. \nonumber
 \end{equation}   

\end{lemma}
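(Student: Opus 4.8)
The plan is to reduce the ambient dimension from $n$ to $k$ using the projection structure, and then to control a sub-Gaussian object by an elementary tail-plus-union-bound argument. Since $\bs{Z}$ is independent of $\bs{P}$, I would first condition on $\bs{P}$ and treat it as a fixed orthogonal projection of rank $k$; conditionally the rows of $\bs{Z}$ remain \iid, mean zero, and $\sigma^2$-sub-Gaussian, so it suffices to prove the bound with $\bs{P}$ deterministic. Writing $\bs{P}=\bs{U}\bs{U}^{\top}$ with $\bs{U}\in\Real^{n\times k}$ having orthonormal columns (the eigenvectors of $\bs{P}$ for the eigenvalue $1$), the invariance of the Frobenius norm under left multiplication by a matrix with orthonormal columns gives
\begin{equation}
\lVert \bs{P}\bs{Z}\rVert^2 = \lVert \bs{U}\bs{U}^{\top}\bs{Z}\rVert^2 = \tr\!\big(\bs{Z}^{\top}\bs{U}\bs{U}^{\top}\bs{Z}\big) = \lVert \bs{U}^{\top}\bs{Z}\rVert^2 =: \lVert \bs{M}\rVert^2, \nonumber
\end{equation}
where $\bs{M}=\bs{U}^{\top}\bs{Z}\in\Real^{k\times d}$. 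This is where the $k$ in the bound originates: the analysis now lives in the $k$-dimensional range of $\bs{P}$ rather than in $\Real^{n}$.

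Second, I would verify that each entry $M_{ij}=\sum_{m=1}^{n}U_{mi}Z_{mj}$ is a scalar $\sigma^2$-sub-Gaussian random variable. For a fixed column $j$, the scalars $\{Z_{mj}\}_{m}$ are independent across $m$ (they live in distinct, independent rows of $\bs{Z}$), and each $Z_{mj}=e_{j}^{\top}\bs{z}^{(m)}$ is $\sigma^2$-sub-Gaussian because it is the projection of the vector-sub-Gaussian row $\bs{z}^{(m)}$ onto the unit vector $e_{j}$, exactly as in the sub-Gaussian definition from the Notation section. A weighted sum of independent $\sigma^2$-sub-Gaussian scalars with weights $\{U_{mi}\}_{m}$ is $\big(\sigma^2\sum_{m}U_{mi}^2\big)$-sub-Gaussian, and $\sum_{m}U_{mi}^2=1$ by orthonormality of the columns of $\bs{U}$; hence each $M_{ij}$ is $\sigma^2$-sub-Gaussian and obeys $\P(\lvert M_{ij}\rvert>t)\le 2e^{-t^2/(2\sigma^2)}$.

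Finally I would combine this tail bound with a union bound over the projected coordinates. Taking the threshold $t=\sigma\sqrt{2\log(2k/\delta)}$ and a union bound over the $k$ directions (for each output coordinate) controls each squared projected coordinate by $2\sigma^2\log(2k/\delta)$ with probability at least $1-\delta$; summing the squares yields
\begin{equation}
\lVert\bs{P}\bs{Z}\rVert^2=\lVert\bs{M}\rVert^2\le 2k\sigma^2\log(2k/\delta)\lesssim \sigma^2 k\big(1+\log(k/\delta)\big), \nonumber
\end{equation}
and taking square roots gives the claim. The main obstacle is that the projected coordinates, while individually $\sigma^2$-sub-Gaussian, are not mutually independent, so their squared sum is not a clean sum of independent sub-exponentials; the union-bound route avoids this at the price of the $\log k$ factor, and this is precisely what converts the naive $\sqrt{k+\log(1/\delta)}$ scaling into the stated $\sqrt{k(1+\log(k/\delta))}$. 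A second point to track is the output dimension $d$: the argument above is carried out per output direction (equivalently for $d=\mathcal{O}(1)$, the relevant regime since $\bs{Z}$ plays the role of the low-dimensional label noise), and a fully $d$-dependent version would either union bound over all $kd$ entries or invoke a Hanson--Wright-type quadratic-form inequality for $\tr(\bs{Z}^{\top}\bs{P}\bs{Z})$, whose verification for vector sub-Gaussian rows is the more delicate alternative.
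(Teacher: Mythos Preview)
The paper does not supply its own proof of this lemma; it is quoted from the cited reference and used as a black box in the excess-risk argument, so there is nothing to compare against. Your argument is sound: the reduction $\lVert\bs{P}\bs{Z}\rVert=\lVert\bs{U}^{\top}\bs{Z}\rVert$ via the spectral factorization $\bs{P}=\bs{U}\bs{U}^{\top}$ is correct, and the verification that each entry of $\bs{U}^{\top}\bs{Z}$ is $\sigma^2$-sub-Gaussian is handled properly (independence across rows of $\bs{Z}$, unit-norm columns of $\bs{U}$). The union-bound-then-sum step delivers precisely the $\sqrt{k(1+\log(k/\delta))}$ scaling, and your remark that this route trades independence of the projected coordinates for a $\log k$ factor is the right diagnosis.

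You have also correctly flagged the one loose end: the stated bound carries no explicit $d$, and your per-column argument is clean only when $d=\mathcal{O}(1)$. In the paper's application the role of $\bs{Z}$ is played by the label noise $\bs{N}$ with rows in $\mathbb{R}^{p}$ and $p$ treated as a constant, so this is indeed the relevant regime; for general $d$ one would need either a union bound over all $kd$ entries (picking up a factor $d$ and $\log(kd/\delta)$) or a Hanson--Wright inequality on $\tr(\bs{Z}^{\top}\bs{P}\bs{Z})$, exactly as you note.
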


\subsection{Technical Lemmas}

\begin{lemma}\label{lemma:cov}Under Assumptions~\ref{assump:eigenvs} and~\ref{assum:lower_rank}, for any $s$ such that $1 \leq s \leq d_{2}-p$, 
\begin{itemize}
\item $\widetilde{\Sigma} \text{\, satisfies \,} \widetilde{\Sigma} \preceq \tilde{a} (1+\varepsilon_{s}) \Sigma  \text{\,\, for some \,}\tilde{a}\geq 0$\ ;
\item $\bar{\Sigma} \text{\, satisfies \,} \bar{\Sigma} \preceq \bar{a}\varepsilon_{s} \Sigma \text{\,\, for some \,} \bar{a}\geq 0$\, .
\end{itemize}
\end{lemma}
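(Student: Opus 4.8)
The plan is to prove both Loewner bounds by the same three-step recipe applied to the two ``error'' covariances $\bar\Sigma$ and $\widetilde\Sigma-\Sigma$: (i) bound the trace of the error covariance by a multiple of $\varepsilon_{s}$; (ii) upgrade the trace bound to a bound on the largest eigenvalue via Assumption~\ref{assump:eigenvs}; (iii) turn the eigenvalue bound into a PSD comparison with $\Sigma$ by projecting onto $\mathrm{range}(\Sigma)$ and using its smallest nonzero eigenvalue. Let $P$ be the orthogonal projector onto $\mathrm{range}(\Sigma)$, so that $\lambda_{\text{min}\neq 0}(\Sigma)\,P\preceq\Sigma$, hence $P\preceq\lambda_{\text{min}\neq 0}(\Sigma)^{-1}\Sigma$. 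Since $\widetilde X$ attains the minimum defining $\varepsilon_{s}$, we have $\E\lVert\bar X\rVert^{2}=\E\lVert X-\widetilde X\rVert^{2}=d_{2}\varepsilon_{s}$, and therefore $\tr(\bar\Sigma)\le\E\lVert\bar X\rVert^{2}=d_{2}\varepsilon_{s}$; this will be used throughout.

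I would treat the bound on $\bar\Sigma$ first, as it is cleaner. By Assumption~\ref{assump:eigenvs}, $\lambda_{\text{max}}(\bar\Sigma)\le c_{2}\tr(\bar\Sigma)/d_{2}\le c_{2}\varepsilon_{s}$. Since the centered version of $\bar X=\widetilde X-X$ lies almost surely in $\mathrm{range}(\Sigma)$ — the structural content behind Assumption~\ref{assum:lower_rank} and the discussion following it — we get $\mathrm{range}(\bar\Sigma)\subseteq\mathrm{range}(\Sigma)$, so $\bar\Sigma=P\bar\Sigma P\preceq\lambda_{\text{max}}(\bar\Sigma)\,P\preceq c_{2}\varepsilon_{s}\,\lambda_{\text{min}\neq 0}(\Sigma)^{-1}\Sigma$. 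This is the second bullet, with $\bar a:=c_{2}/\lambda_{\text{min}\neq 0}(\Sigma)$ (and $\bar a=0$ in the degenerate case $\bar\Sigma=\bs{0}$).

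For the first bullet, write $\widetilde X=X+\bar X$, so $\widetilde\Sigma-\Sigma=\bar\Sigma+\Cov(X,\bar X)+\Cov(X,\bar X)^{\top}$, and taking traces $\lvert\tr(\widetilde\Sigma-\Sigma)\rvert\le\tr(\bar\Sigma)+2\lvert\tr\Cov(X,\bar X)\rvert\le d_{2}\varepsilon_{s}+2\sqrt{\tr(\Sigma)\tr(\bar\Sigma)}\le d_{2}\varepsilon_{s}+2\sqrt{d_{2}\varepsilon_{s}\tr(\Sigma)}$ by Cauchy--Schwarz. Dividing by $d_{2}$, using $\tr(\Sigma)\le d_{2}\lambda_{\text{max}}(\Sigma)$ and $2\sqrt{ab}\le a+b$, I get $d_{2}^{-1}\lvert\tr(\widetilde\Sigma-\Sigma)\rvert\lesssim\lambda_{\text{max}}(\Sigma)+\varepsilon_{s}$, and Assumption~\ref{assump:eigenvs} then gives $\lambda_{\text{max}}(\widetilde\Sigma-\Sigma)\lesssim c_{1}(\lambda_{\text{max}}(\Sigma)+\varepsilon_{s})$. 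Arguing exactly as before that the centered $\widetilde X$ lies in $\mathrm{range}(\Sigma)$ almost surely (so $\mathrm{range}(\widetilde\Sigma-\Sigma)\subseteq\mathrm{range}(\Sigma)$), we obtain $\widetilde\Sigma-\Sigma\preceq\lambda_{\text{max}}(\widetilde\Sigma-\Sigma)\,P\preceq c_{1}'(\lambda_{\text{max}}(\Sigma)+\varepsilon_{s})\,\lambda_{\text{min}\neq 0}(\Sigma)^{-1}\Sigma$ for an absolute constant $c_{1}'$. Adding $\Sigma$ and using $u+v\varepsilon_{s}\le\max(u,v)(1+\varepsilon_{s})$ for $u,v\ge 0$ gives $\widetilde\Sigma\preceq\tilde a(1+\varepsilon_{s})\Sigma$ for a suitable $\tilde a$ depending only on $c_{1}$, $\lambda_{\text{max}}(\Sigma)$, and $\lambda_{\text{min}\neq 0}(\Sigma)$ (degenerate case $\widetilde\Sigma=\Sigma$ handled by $\tilde a=1$).

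The one genuinely non-routine ingredient — and the step I expect to be the main obstacle — is the range inclusions $\mathrm{range}(\bar\Sigma),\mathrm{range}(\widetilde\Sigma)\subseteq\mathrm{range}(\Sigma)$, equivalently that the centered $\bar X$ and $\widetilde X$ lie almost surely in $\mathrm{range}(\Sigma)$; without them the stated Loewner comparisons fail whenever $\Sigma$ is singular, so this is exactly where the content of Assumption~\ref{assum:lower_rank} and the intuition beneath it must be pinned down. Concretely, I would use the decomposition $X=\widetilde{C}\,\E[Y|X_{1}]+A_{p+1:d_{2}}(R(X_{1})-\widetilde{R})\,\E[Y|X_{1}]$ together with the fact that the minimizing $B^{*}$ in~\eqref{approxi_decom} can be taken with columns in $\mathrm{span}\{X(x_{1}):x_{1}\in\mathcal{X}_{1}\}$, and then argue that $\widetilde X=\widetilde{C}\,\E[Y|X_{1}]+B^{*}g^{*}(X_{1})\,\E[Y|X_{1}]$ and $\bar X=\widetilde X-X$ never leave that span; alternatively one simply reads Assumption~\ref{assum:lower_rank} as supplying this range nesting. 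Everything else — the trace bookkeeping, Cauchy--Schwarz, $2\sqrt{ab}\le a+b$, and folding the resulting constants into $\tilde a$ and $\bar a$ — is routine.
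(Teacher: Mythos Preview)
Your argument follows the paper's three-step skeleton --- trace bound, upgrade to a $\lambda_{\max}$ bound via Assumption~\ref{assump:eigenvs}, then convert to a Loewner inequality against $\Sigma$ --- and your treatment of $\bar\Sigma$ is essentially identical to the paper's. Two points of divergence are worth flagging. First, for $\lvert\tr(\widetilde\Sigma-\Sigma)\rvert$ the paper skips your Cauchy--Schwarz detour and asserts directly that $d_{2}\varepsilon_{s}=\E\lVert\widetilde X-X\rVert^{2}\ge\bigl\lvert\E\lVert\widetilde X\rVert^{2}-\E\lVert X\rVert^{2}\bigr\rvert=\lvert\tr(\widetilde\Sigma-\Sigma)\rvert$, which (combined with Assumption~\ref{assump:eigenvs}) gives $\lambda_{\max}(\widetilde\Sigma-\Sigma)\le c_{1}\varepsilon_{s}$ without any $\lambda_{\max}(\Sigma)$ term; your route still proves the lemma, but $\tilde a$ picks up the condition number of $\Sigma$ through the AM--GM step. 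Second, for the passage from an eigenvalue bound to $A\preceq aB$, the paper writes ``one can simply prove $\lambda_{j}(A)\le a\lambda_{j}(B)$'' and invokes Assumption~\ref{assum:lower_rank} only to handle the zero eigenvalues of $\Sigma$, whereas your projection-onto-$\mathrm{range}(\Sigma)$ argument is the more careful way to execute this step; you are right that it genuinely requires the range inclusions $\mathrm{range}(\widetilde\Sigma),\mathrm{range}(\bar\Sigma)\subseteq\mathrm{range}(\Sigma)$ rather than just the rank inequality, and your identification of this as the one non-routine obstacle is on point.
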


\begin{proof}
To prove  $A \preceq a B$ for symmetrical matrices $A, B \in \mathbb{R}^{d\times d}$ and $a \geq 0$, one can simply prove $\lambda_{j}(A) \leq a \lambda_{j}(B), \forall j$. This holds immediately for zero eigenvalues $\lambda_{j}(B)$'s  if $\text{rank}(A)\le\text{rank}(B)$. Therefore, by Assumption~\ref{assum:lower_rank}, we will focus on the case when $\Sigma$ has all positive eigenvalues. First, 
    \begin{align}
        \varepsilon_{s} = \frac{1}{d_{2}}\E[\lVert \widetilde{X}-X \rVert^2] \geq \frac{1}{d_{2}}\left\lvert \E[\lVert \widetilde{X}\rVert^2] - \E[\lVert X\rVert^2]  \right\rvert =\frac{1}{d_{2}} \left\lvert  \text{tr}(\widetilde{\Sigma}-\Sigma)\right\rvert \nonumber. 
    \end{align}
In the following, we will use the fact that $\lambda_{\text{min}\neq 0}(A)\Iv_{d} \preceq A \preceq \lambda_{\text{max}}(A)\Iv_{d}$ for any symmetric matrix $A \in \mathbb{R}^{d \times d}$.    
Using Assumption~\ref{assump:eigenvs},
\begin{equation}
    \widetilde{\Sigma} -\Sigma \preceq \lambda_{\text{max}}( \widetilde{\Sigma} -\Sigma) \Iv_{d_{2}} \leq \frac{c_{1}}{d_{2}} \lvert\text{tr}( \widetilde{\Sigma} -\Sigma) \rvert\Iv_{d_{2}} \leq c_{1}\varepsilon_{s} \Iv_{d_{2}}, \nonumber
\end{equation}
which implies, 
\begin{equation}
    \lambda_{i}(\widetilde{\Sigma}) \leq \lambda_{i}(\Sigma) + c_{1}\varepsilon_{s} \leq \lambda_{i}(\Sigma) + \frac{\lambda_{i}(\Sigma)}{\lambda_{\text{min}\neq 0}(\Sigma)} c_{1} \varepsilon_{s} = \left( 1+ \frac{c_{1} \varepsilon_s}{\lambda_{\text{min}\neq 0}(\Sigma)}  \right)\lambda_{i}(\Sigma) \leq \tilde{a}(1+\varepsilon_{s})\lambda_{i}(\Sigma), \nonumber
\end{equation}
for every $i$ and $\tilde{a} \geq \max(1, \frac{c_{1}}{\lambda_{\text{min}\neq 0}(\Sigma)})$. This immediately leads to $\widetilde{\Sigma} \preceq \tilde{a} (1+\varepsilon_{s}) \Sigma$. 

Finally, recall the fact that $\varepsilon_{s} = \frac{1}{d_{2}}\cdot \text{tr}(\bar{\Sigma})$. By Assumption~\ref{assump:eigenvs}, we have
\begin{equation}
    \bar{\Sigma}  \preceq \lambda_{\text{max}}(\bar{\Sigma}) \Iv_{d_{2}} \preceq \frac{c_{2}}{d_{2}}\text{tr}(\bar{\Sigma})\Iv_{d_{2}} = c_{2}\varepsilon_{s}\Iv_{d_{2}}  \preceq \frac{c_{2}}{\lambda_{\text{min}\neq 0}(\Sigma)}\varepsilon_{s} \Sigma:= \bar{a}\varepsilon_{s} \Sigma . \nonumber
\end{equation}

\end{proof}

\subsection{Proof of Theorem~\ref{thm:ols}}\label{app:thmols}
\begin{proof}
First, recall that $\beta^{*} \in \arg\min_{\beta} \mathsf{error}_{\text{apx}}(\beta)$ and the shorthand $X := \E[X_{2}|X_{1}]$. By the triangle inequality, we have
\begin{equation}
   \mathcal{R}(\hat{\beta}_{0}) = \E[\lVert\E[Y|X_{1}]-\hat{\beta}_{0}X)\rVert^2] \leq \mathsf{error}_{\text{apx}}^{*} +  \E[\lVert(\beta^{*}-\hat{\beta}_{0})X\rVert^2].\nonumber
\end{equation}
 Denote $a(X_{1}) := \E[Y|X_{1}]-\beta^{*}X$. 
 
 Recall that $X = \widetilde{X}+\bar{X}$. Then, we can write $Y = \beta^{*}\widetilde{X} +\beta^{*}\bar{X} + a(X_{1}) + N$, with $N$ satisfying $\E[N|X_{1}]=0$ by the tower property. The definition of $\hat{\beta}_{0}$ implies
\begin{equation}
 \lVert \bs{Y} - \bs{X}\hat{\beta}_{0}^{\top} \rVert^2 \leq  \lVert \bs{Y} - \bs{X}(\beta^*)^{\top} \rVert^2 = \lVert a(\bs{X}_{1})+\bs{N}\rVert^2 \nonumber. 
\end{equation}
By rearranging the terms, we get
\begin{align}
   \lVert \bs{X}(\beta^{*}-\hat{\beta}_{0})^{\top})\rVert^2 &\leq \langle a(\bs{X}_{1}), \bs{\tilde{X}}(\beta^{*}-\hat{\beta}_{0})^{\top}\rangle - \langle \bs{N},  \bs{\tilde{X}}(\beta^{*}-\hat{\beta}_{0})^{\top}\rangle  \nonumber\\
   & + \langle a(\bs{X}_{1}), \bs{\bar{X}}(\beta^{*}-\hat{\beta}_{0})^{\top}\rangle   
-    \langle \bs{N}, \bs{\bar{X}}(\beta^{*}-\hat{\beta}_{0})^{\top}\rangle
   \nonumber. 
\end{align}
We bound the first two inner products in the following, and the other two follow similarly. First,
\begin{align}
   \langle a(\bs{X}_{1}), \bs{ \tilde{X}}(\beta^{*}-\hat{\beta}_{0})^{\top}\rangle \nonumber  &=  \langle \widetilde{\Sigma}^{-1/2}\bs{ \tilde{X}}^{\top} a(\bs{X}_{1}), \widetilde{ \Sigma}^{1/2}(\beta^{*}-\hat{\beta}_{0})^{\top}\rangle \nonumber \\
   &\leq \lVert \widetilde{\Sigma}^{-1/2}\bs{ \tilde{X}}^{\top} a(\bs{X}_{1}) \rVert \lVert  \widetilde{\Sigma}^{1/2}(\beta^{*}-\hat{\beta}_{0})^{\top} \rVert \nonumber\\
   &\leq 1.1 \tilde{a} \tilde{b}\sqrt{1+\varepsilon_{s}}\sqrt{n(p+s)}\lVert  \Sigma^{1/2}(\beta^{*}-\hat{\beta}_{0})^{\top} \rVert  \nonumber\\
   & \lesssim \sqrt{1+\varepsilon_{s}} \sqrt{p+s}\lVert \bs{X}(\beta^{*}-\hat{\beta}_{0})^{\top} \rVert, \label{eq:inner1}
\end{align}
where the inequality $ \lVert \widetilde{\Sigma}^{-1/2}\bs{\tilde{X}}^{\top} a(\bs{X}_{1}) \rVert  \leq 1.1 \tilde{b}\sqrt{n(p+s)}$ is due to Assumption~\ref{assum:bound_appro} and the covariance concentration in Lemma~\ref{lem: concen_cov}, and the last inequality is simply due to the covariance concentration.
The replacement of $\widetilde{\Sigma}$ by $\Sigma$ is by
Lemma~\ref{lemma:cov}. Let $\bs{P}_{\bs{\tilde{X}}}$ denote the projection matrix defined with respect to $\bs{\tilde{X}}$, we have  
\begin{align}
    \langle \bs{N},  \bs{\tilde{X}}(\beta^{*}-\hat{\beta}_{0})^{\top}\rangle  &=  \langle \bs{P}_{\bs{\tilde{X}}}\bs{N},  \bs{\tilde{X}}(\beta^{*}-\hat{\beta}_{0})^{\top}\rangle \nonumber  \\
    &\leq \lVert\bs{P}_{\bs{\tilde{X}}}\bs{N}\rVert \lVert\bs{\tilde{X}}(\beta^{*}-\hat{\beta}_{0})^{\top}\rVert \nonumber. \\
    &\lesssim \sigma\sqrt{1+\varepsilon_{s}}\sqrt{(p+s) \left(1+\log\frac{p+s}{\delta}\right) }\lVert\bs{X}(\beta^{*}-\hat{\beta}_{0})^{\top}\rVert \label{eq:inner2},
\end{align}
where the last bound is due to Lemma~\ref{lem:proj} and the replacement of $\bs{\tilde{X}}$ by $\bs{X}$ follows from the covariance concentration and Lemma~\ref{lemma:cov}. Since we make no assumptions on the rank of $\bar{\Sigma}$, it has at most rank $d_{2}$.  Similarly, we get
\begin{align}
    \langle a(\bs{X}_{1}), \bs{\bar{X}}(\beta^{*}-\hat{\beta}_{0})^{\top}\rangle &\lesssim  \sqrt{\varepsilon_{s}} \sqrt{d_{2}}\lVert  \bs{X}(\beta^{*}-\hat{\beta}_{0})^{\top} \rVert \label{eq:inner3}\\
    \langle \bs{N}, \bs{\bar{X}}(\beta^{*}-\hat{\beta}_{0})^{\top}\rangle &\lesssim \sigma\sqrt{\varepsilon_{s}}\sqrt{d_{2}\left(1 +\log\frac{d_{2}}{\delta}\right)}\lVert\bs{X}(\beta^{*}-\hat{\beta}_{0})^{\top}\rVert. \label{eq:inner4} 
\end{align}
Combining~\eqref{eq:inner1}, ~\eqref{eq:inner2},~\eqref{eq:inner3}, and~\eqref{eq:inner4} yields
\begin{equation*}
      \lVert \bs{X}(\beta^{*}-\hat{\beta}_{0})^{\top}\rVert \lesssim \sigma \sqrt{1+\varepsilon_{s}}\sqrt{(p+s)\left(1 +\log\frac{p+s}{\delta} \right)} + \sigma\sqrt{\varepsilon_{s}}\sqrt{d_{2}\left(1 +\log\frac{d_{2}}{\delta}\right) }.
\end{equation*}
Finally, the covariance concentration implies
\begin{equation*}
    \E[\lVert(\beta^{*}-\hat{\beta}_{0})X\rVert^2] \lesssim (1+\varepsilon_{s}) \frac{(p+s) \left(1+\log\frac{p+s}{\delta}\right)}{n}\sigma^2 + \varepsilon_{s}  \frac{d_{2} \left(1+\log\frac{d_{2}}{\delta} \right)}{n}\sigma^2.
\end{equation*}
\end{proof}

\section{Proof of Corollary~\ref{coro:beta_l}}\label{app:coro9}

\begin{lemma}\label{assump:bound_d1} Under Assumptions~\ref{assump:eigenvs} and~\ref{assum:lower_rank}, for $\lambda>0$, there exists $c_{2}$ such that
   \begin{equation}
     d_{\lambda} \leq c_{2}\left(1+\frac{\varepsilon_{s}}{\lambda}\right)(p+s).  \nonumber
   \end{equation} 
\end{lemma}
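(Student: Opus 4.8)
The plan is to bound the effective dimension $d_{\lambda}=\tr\!\big(\Sigma(\Sigma+\lambda I)^{-1}\big)$ of the learned representation by splitting $X$ into its low-rank surrogate $\widetilde{X}$ and the residual $\bar X=\widetilde{X}-X$, and exploiting that $\widetilde{X}$ lives in a \emph{fixed} $(p+s)$-dimensional subspace while $\bar X$ carries little energy. First I would record the deterministic facts. From $X=\widetilde{X}-\bar X$ and the elementary inequality $(u-v)(u-v)^{\top}\preceq 2uu^{\top}+2vv^{\top}$ we get $\Sigma\preceq 2\widetilde{\Sigma}+2\bar\Sigma$. Since $\widetilde{X}=\widetilde{C}\,\E[Y|X_{1}]+B^{*}\big(g^{*}(X_{1})\E[Y|X_{1}]\big)\in\mathrm{col}(\widetilde{C})+\mathrm{col}(B^{*})$, a subspace of dimension at most $p+s$, we have $\mathrm{rank}(\widetilde{\Sigma})\le p+s$. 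Finally $\tr(\bar\Sigma)=\E\lVert\widetilde{X}-X\rVert^{2}=d_{2}\varepsilon_{s}$, exactly as used in the proof of Lemma~\ref{lemma:cov}. Because $M\mapsto\tr\!\big(M(M+\lambda I)^{-1}\big)=d_{2}-\lambda\tr\!\big((M+\lambda I)^{-1}\big)$ is monotone in the Loewner order (the map $M\mapsto(M+\lambda I)^{-1}$ being operator-antitone), it follows that $d_{\lambda}\le\tr\!\big((2\widetilde{\Sigma}+2\bar\Sigma)(2\widetilde{\Sigma}+2\bar\Sigma+\lambda I)^{-1}\big)$.

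Next I would carry out an eigenvalue split on $M:=2\widetilde{\Sigma}+2\bar\Sigma$. Ordering its eigenvalues $\mu_{1}\ge\mu_{2}\ge\cdots$, the first $p+s$ terms $\mu_{j}/(\mu_{j}+\lambda)$ are each at most $1$ and contribute at most $p+s$. For $j>p+s$, Weyl's inequality together with $\mathrm{rank}(2\widetilde{\Sigma})\le p+s$ gives $\mu_{p+s+i}\le\lambda_{i}(2\bar\Sigma)=2\lambda_{i}(\bar\Sigma)$, so, using that $t\mapsto t/(t+\lambda)$ is increasing, the remaining terms sum to at most $\sum_{i}\tfrac{2\lambda_{i}(\bar\Sigma)}{2\lambda_{i}(\bar\Sigma)+\lambda}=\tr\!\big(2\bar\Sigma(2\bar\Sigma+\lambda I)^{-1}\big)$. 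It then remains to estimate this residual term. The crude bound $\tr\!\big(2\bar\Sigma(2\bar\Sigma+\lambda I)^{-1}\big)\le\tfrac{2\tr(\bar\Sigma)}{\lambda}=\tfrac{2d_{2}\varepsilon_{s}}{\lambda}$ already yields $d_{\lambda}\le(p+s)+\tfrac{2d_{2}\varepsilon_{s}}{\lambda}$; to reach the stated $\mathcal{O}\!\big(\tfrac{\varepsilon_{s}}{\lambda}(p+s)\big)$ I would invoke the finer structure of $\bar\Sigma$: by Lemma~\ref{lemma:cov}, $\bar\Sigma\preceq\bar a\,\varepsilon_{s}\Sigma$ (hence $\lambda_{i}(\bar\Sigma)\le\bar a\,\varepsilon_{s}\lambda_{i}(\Sigma)$); by Assumption~\ref{assump:eigenvs} the spectrum of $\bar\Sigma$ is flat, $\lambda_{\max}(\bar\Sigma)\le c_{2}\varepsilon_{s}$; and by Assumption~\ref{assum:lower_rank}, $\mathrm{rank}(\bar\Sigma)\le\mathrm{rank}(\Sigma)$. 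Combining these to discard all but $O(p+s)$ of the residual eigenvalues delivers the bound. It is worth recording separately the clean sub-case $\bar a\,\varepsilon_{s}<\tfrac12$: then $\Sigma\preceq 2\widetilde{\Sigma}+2\bar a\,\varepsilon_{s}\Sigma$ rearranges to $\Sigma\preceq\tfrac{2}{1-2\bar a\,\varepsilon_{s}}\widetilde{\Sigma}$, and monotonicity plus $\mathrm{rank}(\widetilde{\Sigma})\le p+s$ gives $d_{\lambda}\le p+s$ outright.

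The main obstacle is exactly this last sharpening — passing from the honest estimate $d_{\lambda}\lesssim(p+s)+\tfrac{d_{2}\varepsilon_{s}}{\lambda}$ to $d_{\lambda}\lesssim(p+s)\big(1+\tfrac{\varepsilon_{s}}{\lambda}\big)$; everything upstream (the Loewner monotonicity, the Weyl split, and the trace identity $\tr(\bar\Sigma)=d_{2}\varepsilon_{s}$) is routine. This is the step in which Assumptions~\ref{assump:eigenvs} and~\ref{assum:lower_rank} must be used in tandem with the $(p+s)$-dimensionality of $\widetilde{X}$: intuitively, $\widetilde{X}$ already reproduces $X$ up to a small-energy component confined to a $(p+s)$-dimensional subspace, so $\bar\Sigma$ cannot have many eigenvalues comparable to a typical eigenvalue of $\Sigma$, and $\bar\Sigma\preceq\bar a\,\varepsilon_{s}\Sigma$ then controls the remainder. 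Once the residual term is bounded by $c_{2}\tfrac{\varepsilon_{s}}{\lambda}(p+s)$, adding the $(p+s)$ coming from the leading block gives $d_{\lambda}\le c_{2}\big(1+\tfrac{\varepsilon_{s}}{\lambda}\big)(p+s)$, as claimed.
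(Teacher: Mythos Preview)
Your route via Loewner monotonicity and the Weyl split is \emph{not} the paper's argument, and the place where you say ``this is the main obstacle'' is a real gap that your sketch does not close. After the split you are left with $\tr\!\big(2\bar\Sigma(2\bar\Sigma+\lambda I)^{-1}\big)$, and the three facts you list --- $\bar\Sigma\preceq\bar a\varepsilon_{s}\Sigma$, $\lambda_{\max}(\bar\Sigma)\le c_{2}\varepsilon_{s}$, and $\rank(\bar\Sigma)\le\rank(\Sigma)$ --- only yield bounds of order $\rank(\bar\Sigma)\,\varepsilon_{s}/\lambda$ or $d_{2}\varepsilon_{s}/\lambda$. None of them forces $\bar\Sigma$ to have at most $O(p+s)$ significant eigenvalues; indeed the paper explicitly argues (just before Assumption~\ref{assum:lower_rank}) that $\rank(\bar\Sigma)\ge d_{2}-p-s$ in the interesting regime. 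By first majorizing $\Sigma$ by $2\widetilde\Sigma+2\bar\Sigma$ and then isolating $\bar\Sigma$, you have thrown away the link between the residual and the low-rank object~$\widetilde\Sigma$, and there is no way to recover the factor $p+s$ from $\bar\Sigma$ alone.

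The paper avoids this detour entirely. It works with the sorted eigenvalues $\lambda_{i}:=\lambda_{i}(\Sigma)$ and $\tilde\lambda_{i}:=\lambda_{i}(\widetilde\Sigma)$ and uses the pointwise bound $\lambda_{i}-\tilde\lambda_{i}\le c_{1}\varepsilon_{s}$ (coming from Assumption~\ref{assump:eigenvs} via the argument in Lemma~\ref{lemma:cov}). Then the algebraic identity
\[
\frac{\lambda_{i}}{\lambda+\lambda_{i}}=\frac{\tilde\lambda_{i}}{\lambda+\tilde\lambda_{i}}+\frac{\lambda(\lambda_{i}-\tilde\lambda_{i})}{(\lambda+\lambda_{i})(\lambda+\tilde\lambda_{i})}
\]
gives $d_{\lambda}\le \tilde d_{\lambda}+c_{1}\varepsilon_{s}\sum_{i}\frac{1}{(\lambda+\lambda_{i})(\lambda+\tilde\lambda_{i})}$ with $\tilde d_{\lambda}=\sum_{i}\tilde\lambda_{i}/(\lambda+\tilde\lambda_{i})\le p+s$. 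The key trick for the correction is to insert $\lambda_{\min\neq 0}(\widetilde\Sigma)$:
\[
c_{1}\varepsilon_{s}\sum_{i}\frac{1}{(\lambda+\lambda_{i})(\lambda+\tilde\lambda_{i})}
=\frac{c_{1}\varepsilon_{s}}{\lambda\,\lambda_{\min\neq 0}(\widetilde\Sigma)}\sum_{i}\frac{\lambda}{\lambda+\lambda_{i}}\cdot\frac{\lambda_{\min\neq 0}(\widetilde\Sigma)}{\lambda+\tilde\lambda_{i}},
\]
and then use $\frac{\lambda}{\lambda+\lambda_{i}}\le 1$ together with $\lambda_{\min\neq 0}(\widetilde\Sigma)\le\tilde\lambda_{i}$ on the nonzero $\tilde\lambda_{i}$'s to bound the remaining sum by $\tilde d_{\lambda}\le p+s$. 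The $(p+s)$ factor thus comes from re-entering the effective dimension of~$\widetilde\Sigma$, not from any smallness of $\rank(\bar\Sigma)$; your decomposition never revisits $\widetilde\Sigma$ and so cannot reproduce this step.
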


\begin{remark}
It is known that $ d_{\lambda} =  \E[\lVert(\Sigma +\lambda \Iv)^{-1/2}X\rVert^2]$~\citep{hsu2012random}.
    When $\varepsilon_{s}=0$, we have $ \E[\lVert(\Sigma +\lambda \Iv)^{-1/2}X\rVert^2] \leq  \text{rank}(\Sigma) = p+s$, where the equality holds when $\lambda=0$.  
\end{remark}

\begin{proof}
Let $\{\tilde{\lambda}_{i}\}$ and $\{\lambda_{i}\}$ denote the eigenvalues of $\Sigma$ and $\widetilde{\Sigma}$, respectively. Recall that Assumptions~\ref{assump:eigenvs} and~\ref{assum:lower_rank} imply $\tilde{\lambda}_{i}- \lambda_{i} \leq c_{1}\varepsilon_{s}$ as shown in the proof of Lemma~\ref{lemma:cov}, then we have
    \begin{align}
        d_{\lambda} = \sum_{i=1}^{d}\frac{\lambda_{i}}{\lambda+\lambda_{i}} &=  \sum_{i=1}^{d}\frac{\tilde{\lambda}_{i}}{\lambda+\tilde{\lambda}_{i}} +  \lambda \sum_{i=1}^{d}\frac{\lambda_{i}-\tilde{\lambda}_{i}}{(\lambda+{\lambda}_{i})(\lambda+\tilde{\lambda}_{i})} \nonumber\\       
        &\leq \sum_{i=1}^{d}\frac{\tilde{\lambda}_{i}}{\lambda+\tilde{\lambda}_{i}} +c_{1}\varepsilon_{s} \sum_{i=1}^{d}\frac{1}{(\lambda+{\lambda}_{i})(\lambda+\tilde{\lambda}_{i})} \nonumber\\
        &= \sum_{i=1}^{d}\frac{\tilde{\lambda}_{i}}{\lambda+\tilde{\lambda}_{i}} + \frac{c_{1}\varepsilon_{s}}{\lambda \cdot \lambda_{\text{min}\neq0}(\widetilde{\Sigma})} \sum_{i=1}^{d}\frac{\lambda \cdot\lambda_{\text{min}\neq0}(\widetilde{\Sigma})}{(\lambda+\lambda_{i})(\lambda+\tilde{\lambda}_{i})} \nonumber, 
    \end{align}
where $\frac{\lambda}{\lambda+\lambda_{i}} \leq 1$ for $\forall i$ and $\sum_{i=1}^{d}\frac{\tilde{\lambda}_{i}}{\lambda+\tilde{\lambda}_{i}}  \leq p+s$ since $\text{rank}(\widetilde{\Sigma})\leq p+s$. Finally, we get
\begin{equation}
     d_{\lambda} \leq p+s +  \frac{c_{1}\varepsilon_{s}(p+s)}{\lambda \cdot\lambda_{\text{min}\neq0}(\widetilde{\Sigma})} \leq c_{2}\left(1+\frac{\varepsilon_{s}}{\lambda}\right)(p+s), \nonumber
\end{equation}
where $ c_{2} \geq \max \left\{1, \frac{c_{1}}{\lambda_{\text{min}\neq0}(\widetilde{\Sigma})}\right\}$. 
\end{proof}

\noindent{\bf Corollary~\ref{coro:beta_l}} 
\emph{Under~\citep[Condition 2 and 4]{hsu2012random}, and the assumptions in Lemma~\ref{assump:bound_d1}, the excess risk of the downstream task can be upper bounded by
    \begin{equation}
  \mathcal{R}(\hat{\beta}_{\lambda}) \leq \mathsf{error}_{\text{apx}}^{*} +\E[\lVert(\beta_{\lambda}-\beta^{*})X\rVert^2] + \mathcal{O}\left(\frac{p+s}{n}\left(1+\frac{\varepsilon_{s}}{\lambda}\right)\tilde{\sigma}^2\right), \nonumber    
\end{equation}
with high probability, where $\tilde{\sigma}^2 := \left(  \E[\lvert \E[Y|X]-\beta_{\lambda}X \rVert^2]+\E[\lVert(\beta_{\lambda}-\beta^{*})X\rVert^2]+\sigma^2 \right)$. }
\smallskip

We only outline the main steps and refer the readers to ~\citep[Theorem~16]{hsu2012random} for details. First, by the triangular inequality, we have $\mathcal{R}(\hat{\beta}_{\lambda}) \leq \mathsf{error}_{\text{apx}}^{*}  + \E[\lVert(\hat{\beta}_{\lambda}-\beta^{*})X\rVert^2]$. The bound on the second term can be obtained as discussed below. 
The last term is simply due to $d_{2,\lambda} \leq d_{1,\lambda} \leq c_{\lambda}(1+\frac{\varepsilon_{s}}{\lambda})(p+s)$ by Lemma~\ref{assump:bound_d1}, where $d_{l,\lambda} = \sum_{j=1}^{d}\left(\frac{\lambda_{j}}{\lambda_{j}+\lambda}\right)^l$ for $l \in \{1,2\}$. Observe that the ridge estimator with $Y_{j}= \mathbbm{1}_{\bar{Y}=y_j}$ as the target variable is equivalent to the $j^{th}$ row of the ridge estimator with $Y$ as the target variable. Thus we can provide a bound on $\E[\lVert(\hat{\beta}_{\lambda,j}-\beta_{j}^{*})X \rVert^2]$ for each $j \in \{1,\ldots,p\}$ according to~\citep[Theorem~16]{hsu2012random}. Summing up the inequalities gives $\E[\lVert(\hat{\beta}_{\lambda}-\beta^{*})X \rVert^2] 
     \leq   \E[\lVert(\beta_{\lambda}-\beta^{*})X \rVert^2] + 
     \E[\lVert(\hat{\beta}_{\lambda}-\beta_{\lambda})X \rVert^2]$,
 where $\E[\lVert(\hat{\beta}_{\lambda}-\beta_{\lambda})X \rVert^2]$ is upper bounded in~\citep[Theorem~16]{hsu2012random}.

\section{Synthetic Data: Details of the Data Generation}
\label{sec_app:syn}

Let $X_{1}\sim \mathcal{N}(\bs{0},I_{d_{1}})$. Note that $\E[\lVert X_{1} \rVert] = \sqrt{2}\frac{\Gamma(5.5)}{\Gamma(5)} \approx 3.08$, where $\Gamma(\cdot)$ is the Gamma function.   The label $\bar{Y}$ is determined by $\lVert X_{1}\rVert$ as follows:  $\bar{Y}=0$ when $\lVert X_{1}\rVert<2.5$;  $\bar{Y}=1$ when $2.5 \leq \lVert X_{1}\rVert<3.5$;  $\bar{Y}=2$ when $\lVert X_{1}\rVert\ge 3.5$. Then, let $X_{2} = (A + B g(X_{1}))Y+N$ where $A$ and $B$ are matrices with \iid entries from $\U[-2,2]$ and $N \sim \mathcal{N}(\bs{0},\Iv_{d_{2}})$. The $(i,j)^{th}$ element of $g(X_{1})$ is given by $(\max_{k}(X_{1,k}))\cdot\sin(\frac{2\pi i}{s}\min_{k}(X_{1,k})+\frac{2\pi j}{p})$. The sample sizes of the pretext training data and testing data are $10^4$ and $10^3$, respectively. The MLPs used for the pretext task and two SL procedures all have two fully connected hidden layers with ReLU activation. The batch size is $32$, the number of epochs is $10$, and the learning rate is $0.001$.

\section{Further Details of the Computer Vision Task}
\label{sec_app:cv_shapes}

The sample sizes of the pretext task and testing are fixed to be $20000$ and $1000$, respectively. The edge of the triangle is sampled from $\U[8,32]$, the radius of the circles is sampled from $\U[5,10]$, and the pentagon is drawn within a circle with radius samples from $\U[8,64/3]$. The sizes are chosen to ensure that the average areas of the objects are similar. The pretext task and SL both use convolution neural networks (CNNs) consisting of two convolution layers and two fully connected layers with ReLU activation. The learned representation is obtained from the second convolution layer of the CNN, which has a dimension $d_{2}=12544$. Since the rotated image $X_{2}$ has the same label as $X_{1}$, we also use the rotated images as additional labeled data for the downstream task and SL. In the pretext task, the batch size is $64$, the number of epochs for training is $15$, and the learning rate is $0.001$. For SL, the batch size the modified to be $32$ since the sample size is much smaller. For ridge regression, we choose the shrinkage parameter $\lambda$ from  $200$ numbers evenly spaced on a log scale over $[0.001, 100]$ with $5$-fold cross-validation.

In Section~\ref{sec:exp_cv}, we explain that the unsatisfactory performance of SSL for Triangle vs. Pentagon is potentially due to the analogous characteristics (i.e., the edges and vertices) that determine the orientation of the object. From a theoretical perspective, the full rank condition on $C^{h}(x)$ (that is a necessary condition for the exact matching) is violated since the columns of $C^{h}(x)$ are approximately linearly dependent. This does not happen for Triangle vs. Tangent Circles since the orientation of the circles is determined by curved edges. To further support our observation, we use Grad-CAM~\citep{selvaraju2017grad} to visualize the contributing features that the pretext model used for rotation prediction. To highlight the major components, we only show pixels of the heatmap with top-20\% intensity in Fig.~\ref{fig:heat_shapes}.

\begin{center}\includegraphics[width=.8 \linewidth]{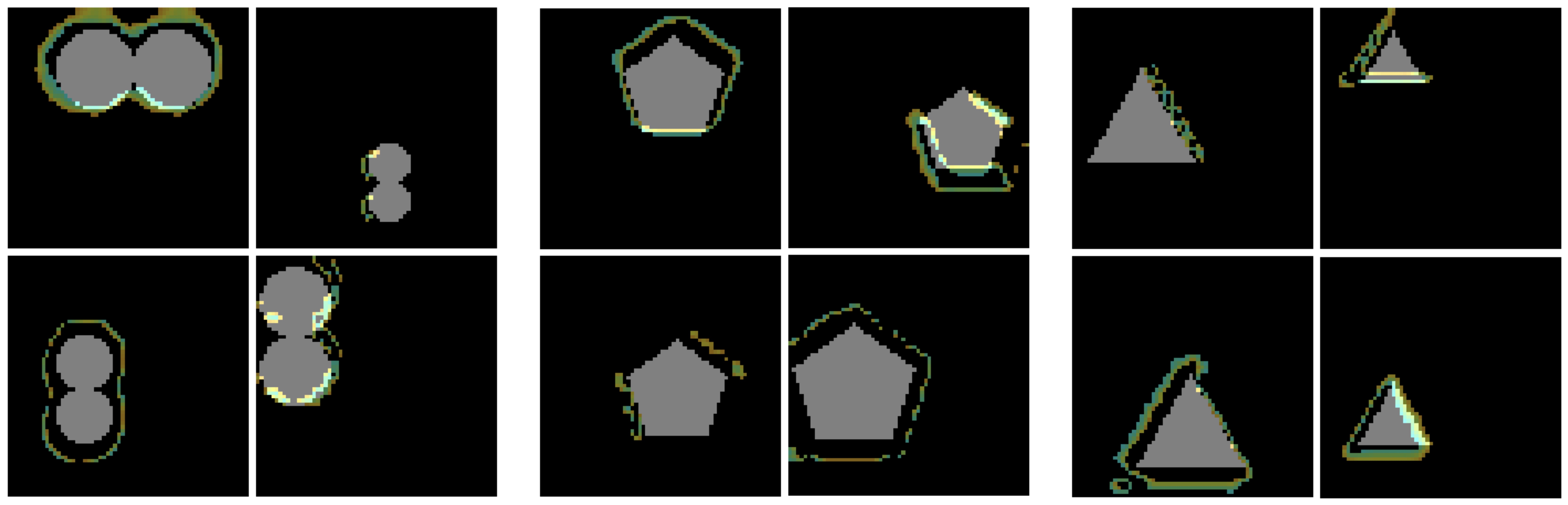}
  \captionof{figure}{\small Visualization of the contributing features for rotation prediction in the computer vision task.  \label{fig:heat_shapes}
  } 
  \end{center}

Similarly to the stylized MNIST dataset, we add dot and dash patterns to the image background. The space in patterns is sampled from $\U[8,32]$. 

\begin{center}\includegraphics[width=.6 \linewidth]{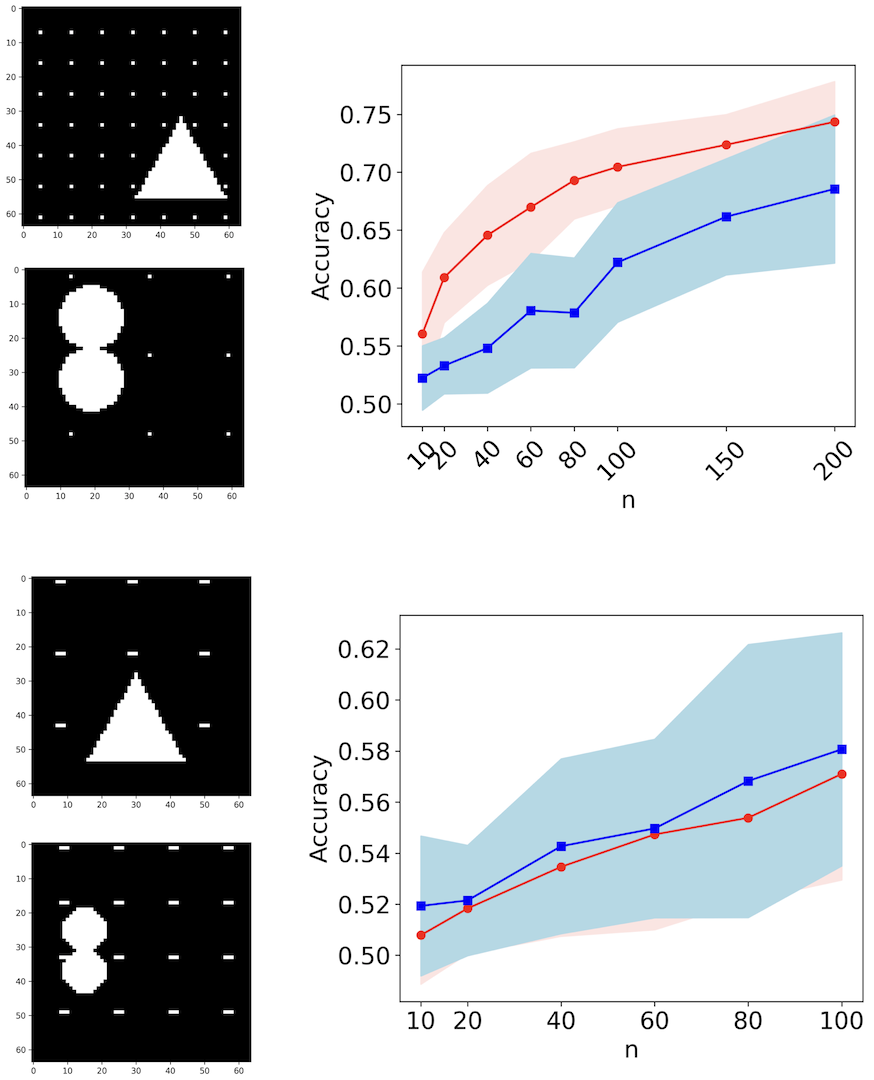}
  \captionof{figure}{\small Geometric shape images with dot vs. dash background. SSL (red) and SL (blue) \label{fig:geo_pat_resutls}
  } 
  \end{center}
  
\newpage
\section{Experiments on the MNIST Dataset}
\label{sec_app:mnist}
The sample sizes for the pretext task and testing is $45000$ 
and $10000$, respectively. We compare the performance of SSL and SL under different labeled sample sizes $\{50,100,200,400\}$.  The space $d$ in the sparse and dense patterns is sampled from $\U[3,8]$ and $\U[8,15]$, respectively. We randomly shift each pattern by $\U[0,d]$ to avoid the position of the pattern being correlated with the image orientation.  We use the same CNN configuration as the geometric shape task.

\begin{center}\includegraphics[width=.75 \linewidth]{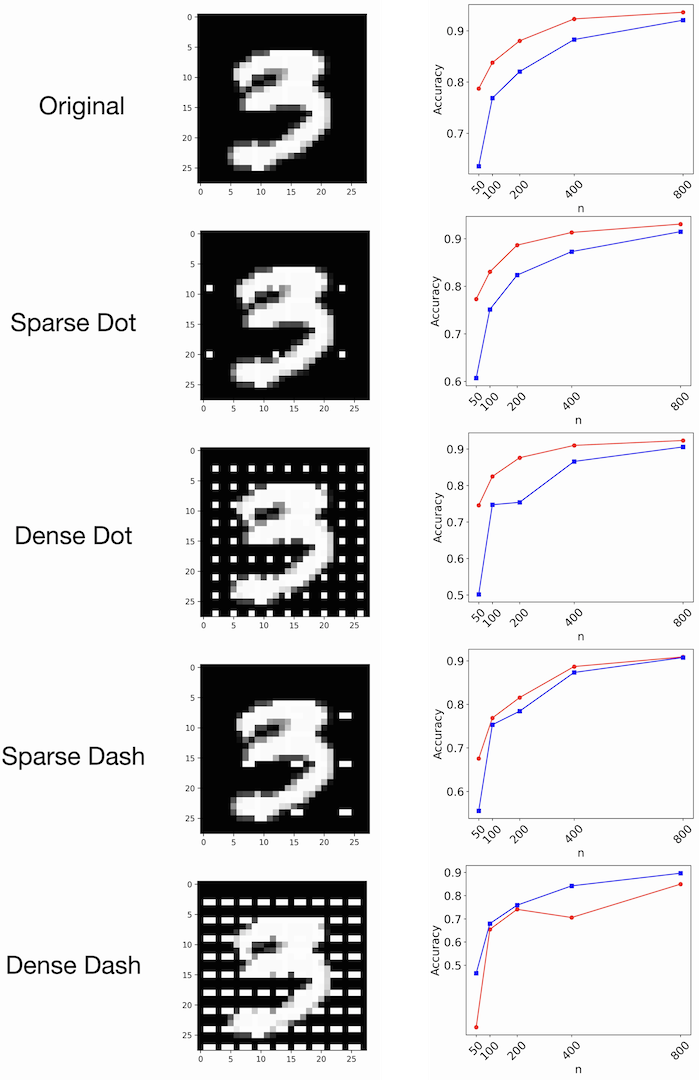}
  \captionof{figure}{\small The original MNIST dataset and four stylized versions of MNIST. SSL (red) and SL (blue).  \label{fig:mnist_resutls}
  } 
  \end{center}

\begin{center}\includegraphics[width=.8 \linewidth]{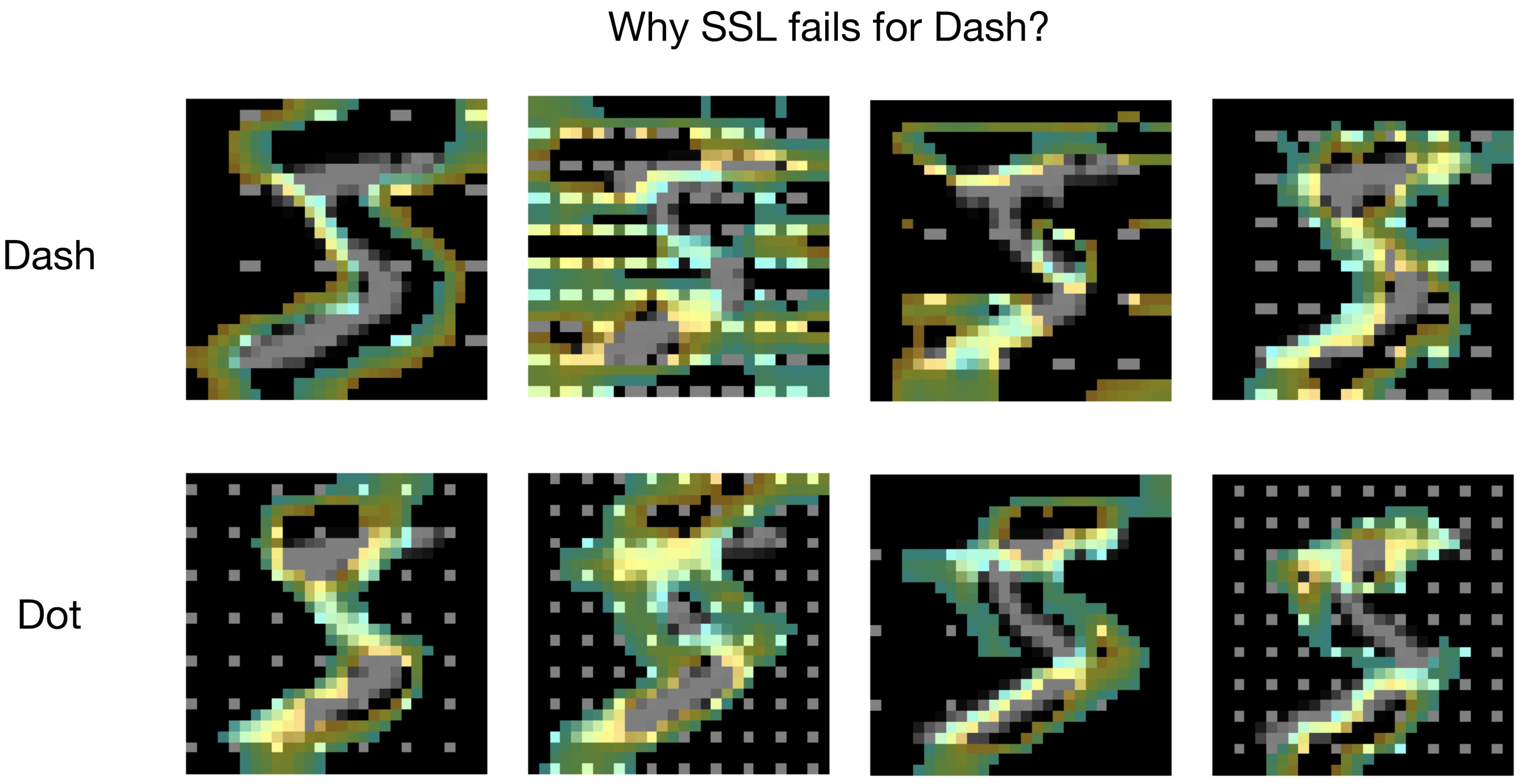}
  \captionof{figure}{\small A visualization of the contributing features for MNIST dataset with dash vs. dot background. 
  \label{fig:mnist_heat}
  } 
  \end{center}

\end{document}